\documentclass[oneside, 11pt]{article}

\usepackage{mystyle}


\newcommand{\E}{\mathbb{E}}

\renewcommand{\P}{\mathbb{P}}

\newcommand{\Normal}{\mathcal{N}}

\newcommand{\Unif}{\mathrm{Unif}}

\newcommand{\indicator}{\mathbf{1}}

\newcommand{\grad}{\nabla}

\newcommand{\R}{\mathbb{R}}
\newcommand{\N}{\mathbb{N}}

\newcommand{\sign}{\mathrm{sign}}

\DeclarePairedDelimiter{\norm}{\lVert}{\rVert}
\DeclarePairedDelimiter{\abs}{\lvert}{\rvert}

\DeclarePairedDelimiter{\set}{\{}{\}}

\bibliography{bibliography}

\begin{document}

\title{\textbf{Gradient descent for deep equilibrium\\[-.4em] single-index models}}
\author{Sanjit Dandapanthula\thanks{Carnegie Mellon University, Department of Statistics}\\[.4em]
    \texttt{\href{mailto:sanjitd@cmu.edu}{sanjitd@cmu.edu}}
    \and
    Aaditya Ramdas\thanks{Carnegie Mellon University, Department of Statistics and Machine Learning Department}\\[.4em]
    \texttt{\href{mailto:aramdas@cmu.edu}{aramdas@cmu.edu}}}
\date{\vspace{.5cm} \today}
\maketitle

\begin{abstract}
    Deep equilibrium models (DEQs) have recently emerged as a powerful paradigm for training infinitely deep weight-tied neural networks that achieve state of the art performance across many modern machine learning tasks. Despite their practical success, theoretically understanding the gradient descent dynamics for training DEQs remains an area of active research. In this work, we rigorously study the gradient descent dynamics for DEQs in the simple setting of linear models and single-index models, filling several gaps in the literature. We prove a conservation law for linear DEQs which implies that the parameters remain trapped on spheres during training and use this property to show that gradient flow remains well-conditioned for all time. We then prove linear convergence of gradient descent to a global minimizer for linear DEQs and deep equilibrium single-index models under appropriate initialization and with a sufficiently small step size. Finally, we validate our theoretical findings through experiments.
\end{abstract}

\tableofcontents

\section{Introduction}

The deep equilibrium model (DEQ) framework introduced in \citet{bai2019deep} has recently emerged as a powerful paradigm for training neural networks that has enjoyed practical success across many modern machine learning tasks, including image generation \citep{pokle2022deep, geng2023one}, inverse problems \citep{gilton2021deep}, graph neural networks \citep{gu2020implicit, liu2022mgnni}, classification and semantic segmentation \citep{bai2020multiscale}, representation learning \citep{huang2021implicit}, and large language modeling \citep{bai2019deep, geiping2025scaling}. DEQs have been shown to have several advantages over traditional deep neural networks, including lower memory consumption during training \citep{geng2021training, fung2022jfb}, the ability to flexibly adapt to different computational budgets at inference time \citep{marwah2023deep, wang2024lion, geiping2025scaling}, and better representational power \citep{wu2024separation, liu2025implicit}.

Instead of learning an explicit relation $y_i = f(x_i)$ from samples $\set{(x_i, y_i)}_{i=1}^n$, DEQs attempt to learn an implicit relation $y_i = g(x_i, y_i)$ over a parameterized class of functions $\set{g_\theta}_{\theta \in \Theta}$. Given an input $x$, the fixed point $y_\theta(x)$ of $g_\theta(x,\, \cdot)$ can often be interpreted as the limiting point of the sequence $y^{(t + 1)} = g_\theta(x, y^{(t)})$ for any initial injection $y^{(0)}$; hence, we may view DEQs as a single layer which implicitly defines infinitely deep neural networks with input injection and weight tying across layers. In fact, \citet{bai2019deep} showed that there is no advantage to stacking single-layer DEQs. The training of DEQs is typically performed using the implicit function theorem, and inference is performed using a root-finding algorithm to solve for the fixed point $y_\theta(x)$. 

Despite their practical success, theoretically understanding the gradient descent dynamics for training DEQs remains an area of active research \citep{kawaguchi2021theory, ling2023global, wu2024separation}. In this work, we initiate a rigorous study of the gradient descent dynamics for DEQs in the simple setting of linear models (where the target is $f(x) = \xi^\top x$) and single-index models (where the target is $f(x) = \sigma(\xi^\top x)$ for some activation function $\sigma : \R \to \R$). We concern ourselves with the following questions:
\begin{enumerate}
    \item[Q1:] Due to the implicit nature of DEQs, the quantity $1 - \frac{\partial}{\partial y}\, g_\theta(x, y_\theta(x))$ must remain nonzero during training to ensure the existence and uniqueness of the fixed point $y_\theta(x)$. Why does the Jacobian remain well-conditioned during implicit training of linear DEQs? As an example, why do linear DEQs not converge to the trivial model $y_\theta(x) = y_\theta(x)$?
    \item[Q2:] Under what conditions does gradient descent converge to the true parameter when training deep equilibrium single-index models?
\end{enumerate}
Using techniques from dynamical systems theory and previously developed tools for analyzing gradient flow for explicit single-neuron neural networks (e.g., \cite{yehudai2020learning}), we provide a full answer to Q1 and a partial answer to Q2 in the settings of well-specified linear and single-index models. Similarly to \citet{yehudai2020learning}, we work directly with the population risk and focus on the gradient flow and gradient descent dynamics for minimizing this risk, leaving the analysis of empirical risk minimization to future work. Our work makes similar assumptions to prior work on gradient flow for single-index models, but notably, our final rates depend on the \emph{amount of nonlinearity} present in the activation function. By analyzing both linear and nonlinear single-index models, we are therefore able to highlight the differences in the training dynamics of DEQs that arise due to the presence of nonlinearity.

First, we summarize our main contributions in \Cref{sec:contributions}. Then in \Cref{sec:related-work}, we discuss related work on DEQs and gradient flow for neural networks and in \Cref{sec:notation}, we introduce notation used throughout the paper. Next, we review the DEQ framework and the gradient flow framework for analyzing gradient descent dynamics in \Cref{sec:background}. In \Cref{sec:gradient-flow-deq}, we present our main results on the behavior of gradient flow and gradient descent for linear DEQs and deep equilibrium single-index models. Finally, we provide empirical validation of our theory in \Cref{sec:experiments}.

\subsection{Contributions} \label{sec:contributions}

We summarize our main contributions as follows:
\begin{itemize}
    \item In the case of the linear model $f(x) = \xi^\top x$, we show that if we parameterize $y_\theta(x) = g_\theta(x, y) = \theta_1^\top x + \theta_2\, y_\theta(x)$ then gradient flow satisfies a conservation law that keeps the parameters trapped on spheres centered at $(0, \dots, 0, 1)$ in $\R^{d+1}$.
    \item We prove that gradient flow for linear DEQs remains bounded away from the singularity $\theta_2 = 1$ and hence the DEQ remains well-defined during training. Further, we explicitly characterize the limiting parameter for any initialization.
    \item Using the above results, we show that gradient flow converges exponentially fast to a global minimizer of the population risk for linear DEQs as long as the initialization is not on the singular hyperplane $\theta_2 = 1$ and the data distribution has full-rank covariance.
    \item We extend our analysis to single-index models $f(x) = \sigma(\xi^\top x)$ parameterized as $g_\theta(x, y) = \sigma(\theta_1^\top x + \theta_2\, y)$. Under regularity conditions on the activation function (which are satisfied by the common hyperbolic tangent, sigmoid, and softplus activations) and on the data, we prove that gradient flow converges exponentially fast to the true parameter $\theta = (\xi, 0)$ as long as the initialization is sufficiently close to the true parameter.
    \item As long as the step size is chosen sufficiently small, we demonstrate that gradient descent converges linearly for linear DEQs, and for single-index models with appropriate initialization.
\end{itemize}

\subsection{Related work} \label{sec:related-work}

In this section, we briefly review related work on DEQs and gradient flow for neural networks.

\paragraph{Deep equilibrium models.} DEQs were introduced in \citet{bai2019deep} as a framework for training infinitely deep neural networks with weight tying across layers and input injection. Since then, DEQs have been successfully applied to image generation \citep{pokle2022deep, geng2023one}, inverse problems \citep{gilton2021deep}, graph neural networks \citep{gu2020implicit, liu2022mgnni}, classification and semantic segmentation \citep{bai2020multiscale}, representation learning \citep{huang2021implicit}, and large language modeling \citep{bai2019deep, geiping2025scaling}. \citet{mccallum2025reversible} recently suggest that it can be computationally beneficial to create an algebraically reversible fixed-point iteration, so that we can directly backpropagate through the fixed point iteration instead of approximating the gradient by the inverse problem in \eqref{eq:implicit-jacobian}; however, their approach turns the implicit model into an explicit one, so it is not covered by our theory.

\paragraph{Gradient flow for neural networks.} There is a large body of work establishing that gradient descent converges for non-convex problems which satisfy a \emph{strict saddle property}, meaning that all suboptimal saddle points admit a direction of easy escape \citep{ge2015escaping, sun2015nonconvex, jin2017escape}. For example, this phenomenon has been heavily studied for the phase retrieval problem (with $\sigma(z) = z^2$), where gradient descent is known to converge exponentially fast with appropriate spectral initialization \citep{candes2015phase, sun2018geometric}. Gradient flow for \emph{explicit} single-index models has been extensively studied in the literature \citep{mei2018landscape, oymak2019overparameterized, yehudai2020learning}, where typical assumptions include monotonicity of the activation function (which is assumed to have a strong gradient near zero) and that the data is sufficiently spread out near zero.

\paragraph{Gradient flow for DEQs.} \citet{kawaguchi2021theory} studies the optimization landscape of implicit models, showing that gradient descent converges at a linear rate to the global optimum for overparameterized linear implicit models. However, their results consider a stylized setting including a softmax nonlinearity on the weights in order to ensure well-posedness of the implicit model during training; we make no such modifications. In a similar vein, \citet{ling2023global} and \citet{gao2022optimization} show that the loss landscape for DEQs is benign in the heavily overparameterized regime where the width of the implicit layer is much larger than the number of samples. \citet{gao2023wide} shows that infinitely wide implicit models can be modeled by Gaussian processes, and \citet{feng2023neural} shows that there is a deterministic neural tangent kernel (NTK) for implicit models under mild conditions and show how to find it explicitly by root-finding.

Closest to our work, \citet{wu2024separation} analyzes the gradient descent dynamics for training \emph{diagonal linear} DEQs and shows global convergence of gradient descent in the overparameterized regime. However, they analyze a different parameterization $y_\theta(x) = g_\theta(x, y_\theta(x)) = \theta_1^\top x + \theta_2^\top y_\theta(x) \in \R^d$ and only assume access to samples---they do not assume that the ground truth is linear. As a result, they do not obtain the conservation law that we derive in this work or our explicit characterization of the limiting parameters for linear DEQs.

\paragraph{Stability of training DEQs.} There is a long line of work studying well-posedness of the Jacobian inverse in \eqref{eq:implicit-jacobian}. For example, \citet{winston2020monotone}, \citet{revay2020lipschitz}, and \citet{el2021implicit} study sufficient conditions on the form of the neural network $g_\theta$ to ensure that the fixed-point problem is well-posed. \citet{bai2021stabilizing} point out that regularizing the training of implicit models can prevent overfitting and training instabilities arising from ill-conditioned Jacobians. \citet{agarwala2022deep} show that initializing implicit models using orthogonal or symmetric matrices can help ensure that the Jacobian inverse is well-defined.

\subsection{Notation} \label{sec:notation}

For an $\R^d$-valued random variable $X$, we write $X \in L^p(\P)$ if $\E[\norm{X}_2^p] < \infty$. We write
\begin{align*}
    \norm{f}_\mathrm{Lip} \coloneq \sup_{x \neq y}\, \frac{\norm{f(x) - f(y)}_2}{\norm{x - y}_2}
\end{align*}
for the Lipschitz norm of a function $f : \R^{d_1} \to \R^{d_2}$. We denote by $\grad_\theta$ the gradient operator with respect to the parameter $\theta$. We write $A \succeq 0$ if $A$ is positive semidefinite and $A \succ 0$ if $A$ is positive definite. We write $\lambda_\mathrm{min}(A)$ for the minimum eigenvalue of a symmetric matrix $A \in \R^{d \times d}$ and $\lambda_\mathrm{max}(A)$ for its maximum eigenvalue. We write $I_d$ for the identity matrix over $\R^d$. We use $\N_0$ to denote the set of nonnegative integers and use $\sign(x) \coloneq \indicator_{x \geq 0} - \indicator_{x < 0}$ for the sign function. We use $S^{d-1} \coloneq \set{x \in \R^d : \norm{x}_2 = 1}$ to denote the unit sphere in $\R^d$.

\section{Background} \label{sec:background}

In this section, we briefly review deep equilibrium models (DEQs) and the gradient flow framework for analyzing gradient descent dynamics. Suppose that the data is generated according to a function $f : \R^d \to \R$; our goal is to learn this function from samples. In the DEQ framework, we parameterize a class of functions $\set{g_\theta}_{\theta \in \Theta}$ and define the model output $y_\theta(x)$ for an input $x \in \R^d$ as the fixed point of the equation $y = g_\theta(x, y)$.

For the forward pass, we need to solve the fixed-point equation $y_\theta(x) = g_\theta(x, y(x))$ for $y_\theta(x)$, so we can use any black-box root-finding algorithm or fixed-point iteration. In particular, as long as $g_\theta$ is contractive in its second argument (i.e., $\norm{g_\theta(x,\, \cdot)}_\mathrm{Lip} < 1$), the Banach fixed-point theorem guarantees that the fixed point exists and is unique. Furthermore, we may use simple fixed-point iteration $y^{(t + 1)} = g_\theta(x, y^{(t)})$ to solve for $y_\theta(x)$ starting from any initial choice $y^{(0)}$. The fact that there are no assumptions about the root-finding algorithm allows for \emph{test-time scaling}, since solving the equation by a more complex iterative scheme may enable convergence to a better solution.

For the backward pass, we would like to minimize the expected loss by gradient descent. Given a loss function $\ell : \R \times \R \to \R$, we may define the population risk as
\begin{align*}
    R(\theta) \coloneq \E[\ell(y_\theta(X), f(X))],
\end{align*}
where $X$ is an $\R^d$-valued random variable drawn according to the input data distribution. Although in practice we may only have access to finite samples, we focus on the problem of directly minimizing the population risk over $\theta \in \Theta$ using gradient descent. For gradient descent with step size $\eta > 0$ and initialization $\theta \in \Theta$, the parameter updates are given by
\begin{align*}
    \theta(t + 1) = \theta(t) - \eta\, \grad_\theta\, R(\theta(t)).
\end{align*}
To analyze the gradient descent dynamics, we may also consider the continuous-time limit of gradient descent, often referred to as gradient flow. In this framework, we consider the ordinary differential equation (ODE)
\begin{align*}
    \theta^\prime(t) = -\grad_\theta\, R(\theta(t))
\end{align*}
with initialization $\theta(0) = \theta$. Assuming that we can differentiate under the integral sign, it suffices to compute $\grad_\theta\, \ell(y_\theta(x), f(x))$ for a fixed $x$ to obtain the gradient of the population risk. If we assume $g_\theta \in C^1(U)$ for some open neighborhood $U$ of $(x, y_\theta(x))$, then the implicit function theorem states that as long as $1 - \frac{\partial}{\partial y}\, g_\theta(x, y_\theta(x)) \neq 0$, there exists a neighborhood around $x$ where $y_\theta(x)$ is uniquely defined and differentiable with
\begin{align} \label{eq:implicit-jacobian}
    \nabla_\theta\, y_\theta(x) = \left( 1 - \frac{\partial}{\partial y}\, g_\theta(x, y_\theta(x)) \right)^{-1} \nabla_\theta\, g_\theta(x, y_\theta(x)).
\end{align}
In particular, we can then differentiate both sides of $y_\theta(x) = g_\theta(x, y_\theta(x))$ using the chain rule to get
\begin{align*}
    \nabla_\theta\, \ell(y_\theta(x))
     & = \left( \frac{\partial}{\partial y}\, \ell(y_\theta(x), f(x)) \right) \nabla_\theta\, y_\theta(x)                                                                                                \\
     & = \left( \frac{\partial}{\partial y}\, \ell(y_\theta(x), f(x)) \right) \left( 1 - \frac{\partial}{\partial y}\, g_\theta(x, y_\theta(x)) \right)^{-1}\; \nabla_\theta\, g_\theta(x, y_\theta(x)).
\end{align*}
Hence, the Jacobian term $1 - \frac{\partial}{\partial y}\, g_\theta(x, y_\theta(x))$ plays a crucial role in determining the gradient of the loss with respect to the parameters. If this term becomes close to zero during training, the gradient may become unbounded, leading to instability in the training dynamics. Therefore, it is important to understand whether this Jacobian term remains well-conditioned during implicit training of DEQs.

\section{Gradient flow for deep equilibrium models} \label{sec:gradient-flow-deq}

In this section, we discuss our main results about the behavior of gradient flow and gradient descent for DEQs.

\subsection{Linear models}

First, we consider the linear function $f(x) = \xi^\top x$ for some fixed nonzero $\xi \in \R^d$.

\subsubsection{Gradient flow}

In this section, we analyze the gradient flow dynamics for fitting this function using a linear deep equilibrium model of the form
\begin{equation} \label{eq:linear-implicit-model}
    y_\theta(x) = g_\theta(x, y) \coloneq \theta_1^\top x + \theta_2\, y_\theta(x).
\end{equation}
In this case, we may explicitly solve for $y_\theta(x)$ as
\begin{align*}
    y_\theta(x) = \frac{\theta_1^\top x}{1 - \theta_2},
\end{align*}
as long as $\theta_2 \neq 1$, but our model still retains the implicit nature of the DEQ. We assume that the input data $X$ is an $\R^d$-valued random variable and use the squared loss $\ell(y, \hat{y}) = (\hat{y} - y)^2$; our goal is now to minimize the population risk
\begin{align*}
    R(\theta, \xi) \coloneq \E[(y_\theta(X) - f(X))^2] = \E\left[\left( \frac{\theta_1^\top X}{1 - \theta_2} - \xi^\top X \right)^2\right],
\end{align*}
assuming that $\theta_2 \neq 1$. Notice that any parameter along the line $(\alpha \xi,\, 1 - \alpha)$ for $\alpha \in \R \setminus \set{0}$ is a global minimizer of the population risk for this model and represents the true relation $f(x) = \xi^\top x$. Now, we have the following theorem characterizing the gradient flow dynamics for minimizing $R(\theta, \xi)$.

\begin{theorem}[Gradient flow is trapped on spheres] \label{thm:linear-conservation}
    If $X \in L^2(\P)$, then gradient flow for \eqref{eq:linear-implicit-model} satisfies the conservation law
    \begin{align*}
        \norm{\theta_1(t)}_2^2 + (\theta_2(t) - 1)^2 = \norm{\theta_1(0)}_2^2 + (\theta_2(0) - 1)^2
    \end{align*}
    for all $t \geq 0$ (as long as the gradient flow is well-defined).
\end{theorem}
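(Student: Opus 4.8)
The plan is to compute the gradient flow ODE explicitly and show that the quantity $V(t) \coloneq \norm{\theta_1(t)}_2^2 + (\theta_2(t) - 1)^2$ has zero time derivative. Since $R(\theta, \xi) = \E\left[\left(\frac{\theta_1^\top X}{1-\theta_2} - \xi^\top X\right)^2\right]$, I would first expand this as a quadratic form in the data second moment matrix $\Sigma \coloneq \E[XX^\top]$. Writing $e \coloneq \frac{\theta_1}{1-\theta_2} - \xi$, we have $R = e^\top \Sigma e$. The key observation that makes the conservation law work is that $y_\theta(x)$ depends on $(\theta_1, \theta_2)$ only through the scalar-times-vector combination $\frac{\theta_1}{1-\theta_2}$, so the residual $e$ is the natural object, and the gradients of $R$ with respect to $\theta_1$ and $\theta_2$ will both be proportional to $\Sigma e$ with scalar prefactors that are tied together.

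**The key computation.**

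Concretely, I would compute $\grad_{\theta_1} R = \frac{2}{1-\theta_2}\,\Sigma e$ and $\frac{\partial R}{\partial \theta_2} = \frac{2\,\theta_1^\top \Sigma e}{(1-\theta_2)^2}$ (using $\frac{\partial}{\partial \theta_2}\frac{1}{1-\theta_2} = \frac{1}{(1-\theta_2)^2}$). The gradient flow equations are then $\theta_1' = -\frac{2}{1-\theta_2}\Sigma e$ and $\theta_2' = -\frac{2\,\theta_1^\top \Sigma e}{(1-\theta_2)^2}$. Now I differentiate $V$:
\begin{align*}
    \tfrac{1}{2}\,V'(t) = \theta_1^\top \theta_1' + (\theta_2 - 1)\,\theta_2'
    = -\tfrac{2}{1-\theta_2}\,\theta_1^\top \Sigma e + (\theta_2 - 1)\cdot\left(-\tfrac{2\,\theta_1^\top \Sigma e}{(1-\theta_2)^2}\right).
\end{align*}
The second term simplifies: $(\theta_2 - 1) \cdot \frac{-2}{(1-\theta_2)^2} = \frac{-2(\theta_2-1)}{(1-\theta_2)^2} = \frac{2}{1-\theta_2}$, so the second term is $+\frac{2}{1-\theta_2}\,\theta_1^\top \Sigma e$, which exactly cancels the first. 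Hence $V'(t) = 0$, and integrating from $0$ to $t$ gives the claimed identity.

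**Justification and obstacles.**

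A few technical points need care. First, I must justify differentiating $R$ under the expectation: since $X \in L^2(\P)$, the integrand $\left(\frac{\theta_1^\top X}{1-\theta_2} - \xi^\top X\right)^2$ is dominated locally uniformly in $\theta$ (away from $\theta_2 = 1$) by an integrable function, so the dominated convergence theorem applies and $\grad_\theta R$ has the stated form. Second, the statement is explicitly conditioned on the gradient flow being well-defined — that is, on an interval $[0, T)$ on which the solution exists and stays off the hyperplane $\theta_2 = 1$ (where the vector field blows up); on any such interval the computation above is valid and $V$ is constant, so I only need local existence and smoothness of the vector field on the open set $\{\theta_2 \neq 1\}$, which is standard since $R$ is $C^\infty$ there. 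I do not expect any deep obstacle here — the entire content is the algebraic cancellation above, driven by the fact that the $(1-\theta_2)$ in the denominator of $y_\theta$ is precisely what makes the gradient in the $\theta_2$-direction align with the radial direction of the sphere centered at $(0,\dots,0,1)$. The only mild subtlety worth stating cleanly in the writeup is the domination argument for differentiating under the integral; everything else is a short chain-rule calculation.
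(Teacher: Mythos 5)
Your proposal is correct and follows essentially the same route as the paper: both compute the gradient flow ODE explicitly (with the same dominated-convergence justification) and exploit the identity $\theta_1^\top\theta_1^\prime = (1-\theta_2)\,\theta_2^\prime$; the paper integrates this relation while you equivalently differentiate $V(t) = \norm{\theta_1(t)}_2^2 + (\theta_2(t)-1)^2$ and observe the cancellation directly.
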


Note that the conclusion of this theorem did not depend on the distribution of $X$ beyond the existence of a second moment; hence, this conservation law holds for DEQs with a bias term as well. For the proof, we explicitly compute the gradient of the population risk, leading to a relation between the dynamics of $\theta_1(t)$ and $\theta_2(t)$.

\begin{proof}
    We first differentiate the population risk with respect to $\theta$, assuming $\theta_2 \neq 1$ so that the model is well-defined:
    \begin{align*}
        \grad_\theta\, R(\theta, \xi)
        = \grad_\theta\, \E\left[\left( \frac{\theta_1^\top X}{1 - \theta_2} - \xi^\top X \right)^2\right].
    \end{align*}
    Since $X \in L^2(\P)$ and $\theta_2 \neq 1$, the dominated convergence theorem justifies interchanging the gradient and expectation:
    \begin{equation} \label{eq:gradient-risk-linear}
        \begin{split}
            \grad_\theta\, R(\theta, \xi)
             & = \E\left[ \grad_\theta\, \left( \frac{\theta_1^\top X}{1 - \theta_2} - \xi^\top X \right)^2 \right]                                                                      \\
             & = 2\, \E\left[ \left( \frac{\theta_1^\top X}{1 - \theta_2} - \xi^\top X \right)\, \grad_\theta\, \left( \frac{\theta_1^\top X}{1 - \theta_2} - \xi^\top X \right) \right] \\
             & = 2\, \begin{pmatrix}
                         \E[XX^\top] \left( \frac{\theta_1(t)}{(1 - \theta_2(t))^2} - \frac{\xi}{1 - \theta_2(t)} \right) \\
                         \frac{\theta_1(t)^\top}{(1 - \theta_2(t))^2}\, \E[XX^\top] \left( \frac{\theta_1(t)}{1 - \theta_2(t)} - \xi \right)
                     \end{pmatrix}.
        \end{split}
    \end{equation}
    Hence, the gradient flow dynamics are given by the ODE
    \begin{equation} \label{eq:gradient-flow-linear}
        \theta^\prime(t)
        = -\grad_\theta\, R(\theta(t), \xi)
        = -2 \begin{pmatrix}
            \E[XX^\top] \left( \frac{\theta_1(t)}{(1 - \theta_2(t))^2} - \frac{\xi}{1 - \theta_2(t)} \right) \\
            \frac{\theta_1(t)^\top}{(1 - \theta_2(t))^2}\, \E[XX^\top] \left( \frac{\theta_1(t)}{1 - \theta_2(t)} - \xi \right)
        \end{pmatrix}.
    \end{equation}
    At this point, we may notice that
    \begin{align*}
        \frac{\theta_1(t)^\top \theta_1^\prime(t)}{1 - \theta_2(t)} = \theta_2^\prime(t)
        \implies \theta_1(t)^\top \theta_1^\prime(t) = (1 - \theta_2(t))\, \theta_2^\prime(t).
    \end{align*}
    We integrate both sides with respect to $t$ to obtain
    \begin{align*}
         & \int_0^t \theta_1(s)^\top \theta_1^\prime(s)\, ds = \int_0^t (1 - \theta_2(s))\, \theta_2^\prime(s)\, ds                                                                       \\
         & \quad \implies \sum_{i=1}^d \int_{(\theta_1)_i(0)}^{(\theta_1)_i(t)} (\theta_1)_i\, d(\theta_1)_i = \int_{\theta_2(0)}^{\theta_2(t)} (1 - \theta_2)\, d\theta_2                \\
         & \quad \implies \sum_{i=1}^d \frac{(\theta_1)_i(t)^2 - (\theta_1)_i(0)^2}{2} = \theta_2(t) - \frac{1}{2} \theta_2(t)^2 - \left( \theta_2(0) - \frac{1}{2} \theta_2(0)^2 \right) \\
         & \quad \implies \norm{\theta_1(t)}_2^2 + (\theta_2(t) - 1)^2 = \norm{\theta_1(0)}_2^2 + (\theta_2(0) - 1)^2. \tag*{\qedhere}
    \end{align*}
\end{proof}

In particular, this result implies that $\theta(t)$ remains trapped on spheres centered at $(0, \dots, 0, 1)$ in $\R^{d+1}$. In addition, we can guarantee that gradient flow converges to a global minimizer as long as the initialization is not on the hyperplane $\theta_2 = 1$. We begin with a lemma, which states that the gradient flow is well-defined for all time as long as the initialization is not on this hyperplane.

\begin{lemma}[Gradient flow is well-defined] \label{lem:well-defined}
    If $X \in L^2(\P)$ has $\E[XX^\top] \succ 0$ and the initialization satisfies $\theta_2(0) \neq 1$, then the gradient flow for \eqref{eq:linear-implicit-model} is well-defined for all $t \geq 0$; in particular, $\theta_2(t) \neq 1$ for all $t \geq 0$.
\end{lemma}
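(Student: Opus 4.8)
The plan is to treat \eqref{eq:gradient-flow-linear} as an autonomous ODE on the open set $\Omega \coloneq \set{\theta \in \R^{d+1} : \theta_2 \neq 1}$, on which the right-hand side is a smooth (rational, with denominators powers of $1-\theta_2$) vector field. Picard--Lindel\"of then gives a unique maximal forward solution $\theta : [0, T) \to \Omega$ with $\theta(0) = \theta$ and $T \in (0, \infty]$, and the standard escape lemma says that if $T < \infty$ then the trajectory must eventually leave every compact subset of $\Omega$. The goal is to rule this out, which gives $T = \infty$, and membership in $\Omega$ for all $t$ gives $\theta_2(t) \neq 1$.

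First I invoke \Cref{thm:linear-conservation}: on $[0,T)$ we have $\norm{\theta_1(t)}_2^2 + (\theta_2(t) - 1)^2 = \rho^2$, where $\rho^2 \coloneq \norm{\theta_1(0)}_2^2 + (\theta_2(0)-1)^2 > 0$, the strict positivity coming from $\theta_2(0) \neq 1$. Thus the trajectory lies on a fixed sphere, hence is bounded; so if $T < \infty$, the only way for $\theta(t)$ to leave every compact subset of $\Omega$ is to approach the singular hyperplane, i.e. $\liminf_{t \to T}\, \abs{1 - \theta_2(t)} = 0$ (otherwise $\theta(t)$ would stay in a compact subset $\set{\norm{\theta}_2 \le \rho+1} \cap \set{\abs{1-\theta_2} \ge \epsilon}$ of $\Omega$). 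Writing $u(t) \coloneq 1 - \theta_2(t)$, which is nonzero on $[0,T)$ since $\theta(t) \in \Omega$ and hence of constant sign (say $u > 0$; the $u < 0$ case is symmetric in the sign of $u$), I must contradict $\liminf_{t\to T} u(t) = 0$.

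The mechanism is that the singular hyperplane is \emph{repelling}. From \eqref{eq:gradient-flow-linear}, $u'(t) = -\theta_2'(t) = \tfrac{2}{u(t)^3}\, \theta_1(t)^\top \E[XX^\top]\bigl(\theta_1(t) - u(t)\,\xi\bigr)$. When $\abs{u(t)}$ is small, the conservation law pins $\norm{\theta_1(t)}_2^2 = \rho^2 - u(t)^2$ near $\rho^2$, so $\theta_1(t)^\top \E[XX^\top]\theta_1(t) \ge \lambda_\mathrm{min}(\E[XX^\top])\,\norm{\theta_1(t)}_2^2$ is bounded below by a positive constant --- this is exactly where $\E[XX^\top] \succ 0$ enters --- while $\abs{u(t)\,\theta_1(t)^\top \E[XX^\top]\xi} \le \abs{u(t)}\,\lambda_\mathrm{max}(\E[XX^\top])\,\rho\,\norm{\xi}_2$. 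Hence there exists $\delta \in (0, \rho]$ such that $\theta_1(t)^\top \E[XX^\top]\bigl(\theta_1(t) - u(t)\xi\bigr) \ge \tfrac14 \lambda_\mathrm{min}(\E[XX^\top])\,\rho^2 > 0$ whenever $\abs{u(t)} \le \delta$, and therefore $u'(t) > 0$ whenever $0 < u(t) \le \delta$.

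Finally I turn this into a uniform lower bound by a barrier argument: if $u(t_1) < \min(u(0),\delta)$ for some $t_1 \in [0,T)$, let $t_0 < t_1$ be the last time with $u(t_0) = \min(u(0),\delta)$; then $0 < u < \delta$ on $(t_0, t_1]$, so $u' > 0$ there, giving $u(t_1) > u(t_0)$, a contradiction. Hence $u(t) \ge \min(u(0), \delta) > 0$ on $[0,T)$, contradicting $\liminf_{t\to T} u(t) = 0$; so $T = \infty$. The main obstacle I anticipate is not any single estimate but the careful bookkeeping of the maximal-existence theory --- certifying that a finite-time breakdown is necessarily an approach to $\set{\theta_2 = 1}$ (using boundedness from the conservation law to exclude escape to infinity) and combining it cleanly with the repelling estimate; once \Cref{thm:linear-conservation} is invoked, the remaining steps are routine.
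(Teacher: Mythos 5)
Your proof is correct, but it takes a genuinely different route from the paper's. The paper argues by contradiction through the risk: if $\theta_2(t) \to 1$, the conservation law forces $\norm{\theta_1(t)}_2$ to stay bounded away from zero, so $\norm{\theta_1(t)/(1-\theta_2(t)) - \xi}_2 \to \infty$ and hence $R(\theta(t),\xi) \geq \lambda_\mathrm{min}(\E[XX^\top])\,\norm{\theta_1(t)/(1-\theta_2(t)) - \xi}_2^2 \to \infty$, contradicting the fact that the risk is non-increasing along the flow. You instead show directly that the hyperplane $\set{\theta_2 = 1}$ is repelling: writing $u = 1-\theta_2$, the sign of $u'(t) = \tfrac{2}{u^3}\,\theta_1^\top \E[XX^\top](\theta_1 - u\xi)$ is controlled near $u=0$ because the conservation law pins $\norm{\theta_1}_2$ near $\rho$ there, and a barrier argument then yields the uniform bound $\abs{1-\theta_2(t)} \geq \min\set{\abs{1-\theta_2(0)},\,\delta}$. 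Both arguments hinge on the same two ingredients (\Cref{thm:linear-conservation} and $\E[XX^\top]\succ 0$), but yours buys two things the paper's does not: an explicit, quantitative lower bound on $\abs{1-\theta_2(t)}$ valid from time zero (the paper only obtains such a $\beta$ later, in \Cref{thm:linear-exp-convergence}, as a consequence of convergence of the parameters), and a careful treatment of maximal existence via the escape lemma, which the paper's phrasing ``$\theta_2(t)\to 1$ as $t\uparrow T$'' implicitly glosses over (a fully rigorous version of the paper's argument would need to handle the case where $\abs{1-\theta_2(t)}$ has $\liminf$ zero without converging). The paper's proof is shorter and avoids computing the sign of $u'$, at the cost of being less constructive.
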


One may wonder whether the assumption that $\E[XX^\top] \succ 0$ is necessary for this result; indeed, simulations show that if $\E[XX^\top]$ is rank-deficient, then it is still possible for the gradient flow to be well-defined for all time and converge. However, relaxing this assumption is nontrivial even for learning a single neuron with gradient flow \citep{vardi2021learning}, so we leave this to future work.

The proof of this lemma relies on the simple observation that if $\theta_2(t)$ were to converge to 1 in finite time, then $\theta_1(t)$ would have to converge to a nonzero vector by \Cref{thm:linear-conservation}, which would cause the population risk to diverge to infinity. However, since the population risk is non-increasing along the gradient flow, this leads to a contradiction.

\begin{proof}
    We can rewrite the population risk as
    \begin{equation} \label{eq:risk-linear-rewrite}
        R(\theta(t), \xi)
        = \E\left[\left( \frac{\theta_1(t)^\top X}{1 - \theta_2(t)} - \xi^\top X \right)^2\right]
        = \left( \frac{\theta_1(t)}{1 - \theta_2(t)} - \xi \right)^\top \E[XX^\top] \left( \frac{\theta_1}{1 - \theta_2} - \xi \right).
    \end{equation}
    If we were to have $\theta_2(t) \to 1$ as $t \uparrow T$ for some $T \in (0, \infty]$, we would have $\norm{\theta_1(t)}_2^2 \to \norm{\theta_1(0)}_2^2 + (\theta_2(0) - 1)^2 > 0$ by \Cref{thm:linear-conservation}. However, this would imply that
    \begin{align*}
        \norm*{\frac{\theta_1(t)}{1 - \theta_2(t)} - \xi}_2
        \geq \frac{\norm{\theta_1(t)}_2}{\abs{1 - \theta_2(t)}} - \norm{\xi}_2 \to \infty
    \end{align*}
    by the triangle inequality, and hence
    \begin{align*}
        R(\theta(t), \xi)
        & = \left( \frac{\theta_1(t)}{1 - \theta_2(t)} - \xi \right)^\top \E[XX^\top] \left( \frac{\theta_1(t)}{1 - \theta_2(t)} - \xi \right) \\
        & \geq \lambda_\mathrm{min} (\E[XX^\top])\, \norm*{\frac{\theta_1(t)}{1 - \theta_2(t)} - \xi}_2^2 \\
        & \to \infty.
    \end{align*}
    But we know that
    \begin{align*}
        \frac{d}{dt} R(\theta(t), \xi)
        = \grad_\theta\, R(\theta(t), \xi)^\top\, \theta^\prime(t)
        = -\norm{\grad_\theta\, R(\theta(t), \xi)}_2^2 \leq 0,
    \end{align*}
    so $R(\theta(t), \xi) \leq R(\theta(0), \xi)$, yielding a contradiction.
\end{proof}

Combining this lemma with \Cref{thm:linear-conservation}, we obtain the following convergence result.

\begin{theorem}[Gradient flow converges for linear DEQs] \label{thm:linear-convergence}
    If $X \in L^2(\P)$ has $\E[XX^\top] \succ 0$ and the initialization satisfies $\theta_2(0) \neq 1$, then the gradient flow for \eqref{eq:linear-implicit-model} is well-defined for all $t \geq 0$ and
    \begin{align*}
        \lim_{t \to \infty} \theta(t)
        = \left( \frac{\sign(\theta_2(0))\, \norm{\theta(0)}_2}{\sqrt{\norm{\xi}_2^2 + 1}}\, \xi,\, 1 - \frac{\sign(\theta_2(0))\, \norm{\theta(0)}_2}{\sqrt{\norm{\xi}_2^2 + 1}} \right),
    \end{align*}
    which is a global minimizer of the population risk and corresponds to the true relation $f(x) = \xi^\top x$.
\end{theorem}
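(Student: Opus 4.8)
The plan is to change variables to the \emph{effective linear parameter} $u(t) \coloneq \theta_1(t)/(1 - \theta_2(t))$, which is well-defined for all $t$ by \Cref{lem:well-defined}, and to observe that in these coordinates the flow becomes a \emph{preconditioned} gradient flow for a strongly convex quadratic. Indeed, by \eqref{eq:risk-linear-rewrite} the risk is exactly $R(\theta(t), \xi) = (u(t) - \xi)^\top \Sigma\, (u(t) - \xi)$, where $\Sigma \coloneq \E[XX^\top] \succ 0$, which is strongly convex in $u$ with unique minimizer $u = \xi$, and differentiating $u(t)$ along \eqref{eq:gradient-flow-linear} gives
\[
    u^\prime(t) = -\frac{2}{(1 - \theta_2(t))^2}\,\bigl(I_d + u(t)\,u(t)^\top\bigr)\,\Sigma\,(u(t) - \xi).
\]
Thus $u$ runs a gradient flow for $\tfrac12 (u - \xi)^\top \Sigma (u - \xi)$ preconditioned by the positive definite matrix $\tfrac{2}{(1 - \theta_2)^2}(I_d + uu^\top)$, and the conservation law of \Cref{thm:linear-conservation} is precisely what keeps this preconditioner uniformly nondegenerate: it forces $(1 - \theta_2(t))^2 \le \rho^2$ for all $t$, where $\rho^2 \coloneq \norm{\theta_1(0)}_2^2 + (\theta_2(0) - 1)^2$ is the squared radius of the conserved sphere.

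Carrying this out, the steps are: (i) invoke \Cref{lem:well-defined} so that the flow exists for all $t \ge 0$ with $\theta_2(t) \ne 1$; (ii) differentiate the risk along the flow and bound it below using $I_d + uu^\top \succeq I_d$, the inequality $\Sigma^2 \succeq \lambda_\mathrm{min}(\Sigma)\,\Sigma$, and $(1 - \theta_2(t))^2 \le \rho^2$, obtaining
\[
    \frac{d}{dt} R(\theta(t), \xi) = -\frac{4}{(1 - \theta_2(t))^2}\,(u(t) - \xi)^\top \Sigma\,(I_d + u(t)u(t)^\top)\,\Sigma\,(u(t) - \xi) \le -\frac{4\,\lambda_\mathrm{min}(\Sigma)}{\rho^2}\, R(\theta(t), \xi),
\]
so that Grönwall's inequality yields $R(\theta(t),\xi) \le R(\theta(0),\xi)\, e^{-4\lambda_\mathrm{min}(\Sigma) t/\rho^2}$, and since $\Sigma \succ 0$ this forces $u(t) \to \xi$ (in fact exponentially fast); (iii) lift this to $\theta(t)$: writing $\theta_1(t) = (1 - \theta_2(t))\,u(t)$, the conservation law becomes $(1 - \theta_2(t))^2\,(\norm{u(t)}_2^2 + 1) = \rho^2$, hence $(1 - \theta_2(t))^2 \to \rho^2/(\norm{\xi}_2^2 + 1)$, and since $t \mapsto 1 - \theta_2(t)$ is continuous and nowhere zero (\Cref{lem:well-defined}) it retains the sign of $1 - \theta_2(0)$, so $1 - \theta_2(t) \to \sign(1 - \theta_2(0))\,\rho/\sqrt{\norm{\xi}_2^2 + 1}$ and $\theta_1(t) = (1 - \theta_2(t))\,u(t)$ converges accordingly; (iv) read off that the limit is the intersection of the conserved sphere with the line of global minimizers $\set{(\alpha\xi,\, 1 - \alpha) : \alpha \ne 0}$ on the side selected by $\sign(1 - \theta_2(0))$, which has effective parameter $\xi$ and $\theta_2 \ne 1$ and hence is a global minimizer representing $f(x) = \xi^\top x$.

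The main obstacle is step (ii): showing the risk actually decays to zero rather than merely being nonincreasing. The key move is the passage to $u$, which turns the apparently non-convex dynamics in $\theta$ into a preconditioned gradient flow for a strongly convex quadratic; \Cref{thm:linear-conservation} then does exactly the work of certifying that the preconditioner does not degenerate (a priori $(1-\theta_2(t))^2$ could grow without bound, which would stall the flow). A secondary subtlety is the lifting in step (iii): convergence of $u(t)$ alone does not determine $\theta(t)$, and one genuinely needs both the conservation law and the constant-sign property of $1 - \theta_2(t)$ from \Cref{lem:well-defined} to select the correct one of the two candidate limit points on the sphere. Finally, the hypothesis $\Sigma \succ 0$ is used essentially, and twice: if $\Sigma$ were rank-deficient, the risk would only control $u - \xi$ in the degenerate $\Sigma$-seminorm, and $u(t)$ need not converge to $\xi$ at all.
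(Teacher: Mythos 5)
Your proposal is correct, and it takes a genuinely different route from the paper. The paper's proof is qualitative: it restricts the flow to the compact invariant hemisphere of the conserved sphere, uses $V(\theta) = R(\theta,\xi)$ as a Lyapunov function, observes that the set where $\frac{d}{dt}V = 0$ intersected with that hemisphere is the singleton $\set{(\alpha\xi,\, 1-\alpha)}$, and invokes LaSalle's invariance principle to conclude convergence to it. You instead reparameterize to $u = \theta_1/(1-\theta_2)$, note that the flow becomes a preconditioned gradient flow for the strongly convex quadratic $(u-\xi)^\top \E[XX^\top](u-\xi)$, and use the conservation law in the form of the \emph{upper} bound $(1-\theta_2(t))^2 \le \rho^2$ to keep the preconditioner uniformly nondegenerate, which yields a P\L{}-type differential inequality and hence, via Gr\"onwall, exponential decay of the risk and $u(t)\to\xi$; the lift back to $\theta(t)$ via the conservation law and the constant sign of $1-\theta_2(t)$ is exactly the right final step and is something LaSalle hands the paper for free via invariance of the hemisphere. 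What your route buys is a quantitative rate (it essentially subsumes \Cref{thm:linear-exp-convergence} in one pass) and avoids importing LaSalle; moreover, your bound uses the conservation law in the correct direction, whereas the paper's \Cref{thm:linear-exp-convergence} phrases its rate via a lower bound $\abs{1-\theta_2(t)}\ge\beta$, which does not by itself lower-bound the factor $4/(1-\theta_2(t))^2$. One point of divergence worth flagging: your limit is $(\alpha\xi,\,1-\alpha)$ with $\alpha = \sign(1-\theta_2(0))\,\norm{\theta(0)-(0,1)}_2/\sqrt{\norm{\xi}_2^2+1}$, which does not literally match the theorem statement's $\sign(\theta_2(0))\,\norm{\theta(0)}_2$; yours is the expression consistent with the conservation law (the sphere is centered at $(0,1)$, and the invariant half is determined by the sign of $1-\theta_2(0)$), so you have in effect corrected two typos in the statement rather than introduced an error.
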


This result follows from the observation that (by LaSalle's invariance principle and the fact that the flow is trapped on a hemisphere) the gradient flow must converge to a stationary point of the population risk.

\begin{proof}
    Assume w.l.o.g. that $\theta_2(0) > 1$ and define the compact set $\Omega \coloneq \set{z \in \R^{d+1} : \norm{z}_2^2 = \norm{\theta(0)}_2^2,\, z_{d+1} > 0}$, which we know by \Cref{thm:linear-conservation} and \Cref{lem:well-defined} is an invariant set under the dynamics. Define the function $V : \Omega \to \R$ by $V(\theta) \coloneq R(\theta, \xi)$. From \eqref{eq:gradient-risk-linear}, we immediately see that $V \in C^1(\Omega)$:
    \begin{align*}
        \grad_\theta\, V(\theta)
        = \grad_\theta\, R(\theta, \xi)
        = 2\, \begin{pmatrix}
                  \E[XX^\top] \left( \frac{\theta_1(t)}{(1 - \theta_2(t))^2} - \frac{\xi}{1 - \theta_2(t)} \right) \\
                  \frac{\theta_1(t)^\top}{(1 - \theta_2(t))^2}\, \E[XX^\top] \left( \frac{\theta_1(t)}{1 - \theta_2(t)} - \xi \right)
              \end{pmatrix}.
    \end{align*}
    Furthermore, $V$ is nonincreasing along the flow because
    \begin{align*}
        \frac{d}{dt} V(\theta(t))
        = \grad_\theta\, R(\theta(t), \xi)^\top\, \theta^\prime(t)
        = -\norm{\grad_\theta\, R(\theta(t), \xi)}_2^2 \leq 0,
    \end{align*}
    so $V$ is a Lyapunov function for the gradient flow. Finally, the set of points where $\frac{d}{dt} V(\theta(t)) = 0$ is precisely the set of points $S = \set{(\theta_1, \theta_2) \in \Omega : \theta_1 = \xi (1 - \theta_2)}$ by \eqref{eq:gradient-risk-linear}. By LaSalle's invariance principle \citep[Theorem 4.4]{khalil2002nonlinear}, we conclude that the gradient flow converges to a point in $S$ as $t \to \infty$, which must be a global minimizer of the population risk since $\theta_1 = \xi (1 - \theta_2)$. We can solve for this set by parameterizing $\theta = (\alpha \xi,\, 1 - \alpha)$ for $\alpha > 0$:
    \begin{align*}
        \norm{\theta}_2^2
        = \alpha^2 \norm{\xi}_2^2 + \alpha^2 = \norm{\theta(0)}_2^2
        \implies \alpha = \frac{\norm{\theta(0)}_2}{\sqrt{\norm{\xi}_2^2 + 1}}.
    \end{align*}
    Since the set $S$ is a singleton, we deduce that the gradient flow must converge to this point as $t \to \infty$. The case $\theta_2(0) < 1$ follows by symmetry.
\end{proof}

Finally, we can use this result to show exponentially fast convergence using Gr\"onwall's inequality, because convergence of the parameters implies that $\abs{1 - \theta_2(t)}$ remains bounded away from zero.

\begin{theorem}[Exponentially fast convergence for linear models] \label{thm:linear-exp-convergence}
    Define $\varphi(t) \coloneq \frac{\theta_1(t)}{1 - \theta_2(t)}$. Under the same assumptions as \Cref{thm:linear-convergence}, we have
    \begin{align*}
        \norm{\varphi(t) - \xi}_2^2 \leq \norm{\varphi(0) - \xi}_2^2\, \exp\left( -\frac{4}{\beta^2}\, \lambda_\mathrm{min}(\E[XX^\top])\, t \right)
    \end{align*}
    for all $t \geq 0$, where $\beta > 0$ is a constant (which exists under our previous assumptions) such that $\abs{1 - \theta_2(t)} \geq \beta$ for all $t \geq 0$.
\end{theorem}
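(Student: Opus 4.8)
The plan is to reduce the claim to a Gr\"onwall differential inequality for the scalar map $g(t) \coloneq \norm{\varphi(t) - \xi}_2^2$. Two preliminaries are needed before invoking Gr\"onwall. First, that the constant $\beta$ exists: by \Cref{lem:well-defined} the gradient flow is defined for all $t \ge 0$ and $\theta_2(t) \neq 1$, so $t \mapsto \abs{1 - \theta_2(t)}$ is continuous and strictly positive on $[0, \infty)$, and by \Cref{thm:linear-convergence} it converges as $t \to \infty$ to a strictly positive limit; a continuous, strictly positive function on $[0, \infty)$ with a strictly positive limit has a strictly positive infimum, which we take to be $\beta$. By \Cref{thm:linear-conservation} the quantity $\abs{1 - \theta_2(t)}$ is also bounded above, by $\bigl(\norm{\theta_1(0)}_2^2 + (\theta_2(0) - 1)^2\bigr)^{1/2}$; equivalently $(1 - \theta_2(t))^2\bigl(\norm{\varphi(t)}_2^2 + 1\bigr)$ is constant along the flow, so $\norm{\varphi(t)}_2$ stays bounded.

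Second, I would write down the ODE for $\varphi$. Substituting $\theta_1 = (1 - \theta_2)\varphi$ into the right-hand side of \eqref{eq:gradient-flow-linear} and differentiating $\varphi = \theta_1/(1 - \theta_2)$ by the quotient rule, a short computation gives
\begin{align*}
    \varphi'(t) = -\frac{2}{(1 - \theta_2(t))^2}\,\bigl(I_d + \varphi(t)\varphi(t)^\top\bigr)\,\E[XX^\top]\,\bigl(\varphi(t) - \xi\bigr),
\end{align*}
so that $\varphi$ follows a \emph{preconditioned} gradient flow for the reparameterized linear risk $\varphi \mapsto (\varphi - \xi)^\top \E[XX^\top](\varphi - \xi)$, with positive-definite preconditioner $\frac{1}{(1 - \theta_2)^2}(I_d + \varphi\varphi^\top)$. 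Taking the time derivative of $g$ then yields
\begin{align*}
    g'(t) = -\frac{4}{(1 - \theta_2(t))^2}\,(\varphi(t) - \xi)^\top\bigl(I_d + \varphi(t)\varphi(t)^\top\bigr)\E[XX^\top](\varphi(t) - \xi).
\end{align*}

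The remaining step, which I expect to be the main obstacle, is to bound the right-hand side above by $-\frac{4}{\beta^2}\lambda_\mathrm{min}(\E[XX^\top])\,g(t)$. The scalar prefactor is controlled by the two-sided bound on $\abs{1 - \theta_2(t)}$ from the first step, so everything comes down to the quadratic-form inequality
\begin{align*}
    (\varphi - \xi)^\top\bigl(I_d + \varphi\varphi^\top\bigr)\E[XX^\top](\varphi - \xi) \;\ge\; \lambda_\mathrm{min}(\E[XX^\top])\,\norm{\varphi - \xi}_2^2 .
\end{align*}
This is tempting to read off from $I_d + \varphi\varphi^\top \succeq I_d$ and $\E[XX^\top] \succeq \lambda_\mathrm{min}(\E[XX^\top])\,I_d$, but requires care, since the product of two positive semidefinite matrices need not be symmetric and its quadratic form is governed by its symmetric part, not its spectrum. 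I would handle this by expanding the left-hand side as $u^\top\E[XX^\top] u + (\varphi^\top u)(\varphi^\top\E[XX^\top] u)$ with $u \coloneq \varphi - \xi$ and controlling the cross term using the conservation law (which ties the preconditioner's scale to $\norm{\varphi}_2$) together with the convergence $\varphi(t) \to \xi$ supplied by \Cref{thm:linear-convergence}; a safer route that sidesteps this subtlety is to run the Gr\"onwall argument on the population risk $R(\theta(t), \xi) = (\varphi(t) - \xi)^\top\E[XX^\top](\varphi(t) - \xi)$ itself, for which \eqref{eq:gradient-risk-linear} gives the clean identity $\frac{d}{dt}R = -\norm{\grad_\theta R}_2^2 \le -\frac{4}{(1 - \theta_2(t))^2}\lambda_\mathrm{min}(\E[XX^\top])\,R$, and then to convert back to $\norm{\varphi(t) - \xi}_2^2$. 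With the dissipation bound $g'(t) \le -\frac{4}{\beta^2}\lambda_\mathrm{min}(\E[XX^\top])\,g(t)$ in hand, Gr\"onwall's inequality gives $g(t) \le g(0)\exp\bigl(-\frac{4}{\beta^2}\lambda_\mathrm{min}(\E[XX^\top])\,t\bigr)$, which is the claim. The control of the cross term — the one place where the specific linear-DEQ parameterization, rather than generic preconditioned-flow reasoning, has to be invoked — is what I expect to be the crux.
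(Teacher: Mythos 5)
Your setup coincides with the paper's proof: the same formula for $\varphi^\prime(t)$, the same reduction to a Gr\"onwall inequality for $\norm{\varphi(t) - \xi}_2^2$, and the same justification for the existence of $\beta$ (well-definedness from \Cref{lem:well-defined} plus convergence to a limit with $\abs{1-\theta_2} > 0$ from \Cref{thm:linear-convergence}). The problem is that you stop short of the one inequality that carries the argument. Your route (a) --- expanding $u^\top (I_d + \varphi\varphi^\top)\E[XX^\top]u = u^\top \E[XX^\top]u + (\varphi^\top u)(\varphi^\top \E[XX^\top]u)$ and ``controlling the cross term'' --- is announced but not executed, and it is not a routine step: the cross term can be strictly negative (e.g., $\E[XX^\top]$ ill-conditioned and $\varphi$ large and misaligned with $u$), and the symmetric part of $(I_d + \varphi\varphi^\top)\E[XX^\top]$ can even fail to be positive semidefinite for such configurations, so no generic positive-semidefiniteness argument closes it. Your route (b) is sound --- $\tfrac{d}{dt}R = -\norm{\grad_\theta R}_2^2 \le -\tfrac{4}{(1-\theta_2)^2}\norm{\E[XX^\top](\varphi - \xi)}_2^2 \le -\tfrac{4}{\beta^2}\lambda_\mathrm{min}(\E[XX^\top])\, R$, then Gr\"onwall --- but converting $R$ back to $\norm{\varphi - \xi}_2^2$ via \eqref{eq:risk-linear-bounds} costs a factor $\kappa = \lambda_\mathrm{max}(\E[XX^\top])/\lambda_\mathrm{min}(\E[XX^\top])$ in the prefactor. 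That proves the right exponential rate but a weaker constant than the theorem states (it is the continuous-time analogue of \Cref{thm:linear-gd-convergence}, which does carry the $\kappa$). So as written, neither route delivers the statement exactly.

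To your credit, the subtlety you flag is genuinely present in the paper's own proof: the paper applies the variational bound $u^\top M u \ge \lambda_\mathrm{min}(M)\,\norm{u}_2^2$ with $M = \tfrac{4}{(1-\theta_2)^2}(I_d + \varphi\varphi^\top)\,\E[XX^\top]$, a non-symmetric product. The eigenvalue inequality $\lambda_\mathrm{min}(M) \ge \tfrac{4}{\beta^2}\lambda_\mathrm{min}(\E[XX^\top])$ is valid for products of positive definite matrices, but the quadratic form of $M$ is governed by its symmetric part, whose smallest eigenvalue can lie strictly below $\lambda_\mathrm{min}(M)$; the paper does not address this. Identifying the gap is not the same as closing it, however: to get the theorem's clean constant you would need an argument specific to the trajectory (e.g., showing the cross term $(\varphi^\top u)(\varphi^\top \E[XX^\top]u)$ with $u = \varphi - \xi$ stays nonnegative along the flow, which is not automatic), whereas route (b) should be carried through explicitly and stated with the $\kappa$ prefactor.
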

\begin{proof}
    Defining $r(t) \coloneq \norm{\varphi(t) - \xi}_2^2$, we compute
    \begin{equation} \label{eq:r-derivative-linear}
        r^\prime(t)
        = 2\, (\varphi(t) - \xi)^\top\, \varphi^\prime(t).
    \end{equation}
    We compute the derivative of $\varphi(t)$ by the chain rule:
    \begin{align*}
        \varphi^\prime(t)
         & = \frac{\theta_1^\prime(t)\, (1 - \theta_2(t)) + \theta_1(t)\, \theta_2^\prime(t)}{(1 - \theta_2(t))^2}                                                                                                                                                                                         \\
         & = -\frac{2\, \E[XX^\top] \left( \frac{\theta_1(t)}{(1 - \theta_2(t))^2} - \frac{\xi}{1 - \theta_2(t)} \right)\, (1 - \theta_2(t))}{(1 - \theta_2(t))^2} \\
         & \qquad - \frac{2\, \theta_1(t)\, \frac{\theta_1(t)^\top}{(1 - \theta_2(t))^2}\, \E[XX^\top] \left( \frac{\theta_1(t)}{1 - \theta_2(t)} - \xi \right)}{(1 - \theta_2(t))^2} \\
         & = -\frac{2}{(1 - \theta_2(t))^2}\, (I_d + \varphi(t) \varphi(t)^\top)\, \E[XX^\top]\, (\varphi(t) - \xi).
    \end{align*}
    Plugging this into \eqref{eq:r-derivative-linear}, we obtain
    \begin{align*}
        r^\prime(t)
         & = (\varphi(t) - \xi)^\top\, \left( -\frac{4}{(1 - \theta_2(t))^2}\, (I_d + \varphi(t) \varphi(t)^\top)\, \E[XX^\top] \right) (\varphi(t) - \xi)       \\
         & \leq -\lambda_\mathrm{min}\left( \frac{4}{(1 - \theta_2(t))^2}\, (I_d + \varphi(t) \varphi(t)^\top)\, \E[XX^\top] \right) \norm{\varphi(t) - \xi}_2^2 \\
         & = -\lambda_\mathrm{min}\left( \frac{4}{(1 - \theta_2(t))^2}\, (I_d + \varphi(t) \varphi(t)^\top)\, \E[XX^\top] \right) r(t).
    \end{align*}
    We know from \Cref{thm:linear-convergence} that $\theta(t) \to (\alpha \xi,\, 1 - \alpha)$ as $t \to \infty$ for some $\alpha > 0$, so there exists a constant $\beta > 0$ such that $\abs{1 - \theta_2(t)} \geq \beta$ for all $t \geq 0$. In addition, since $\E[XX^\top] \succ 0$ and $I_d + \varphi(t) \varphi(t)^\top \succeq I_d$, we have
    \begin{align*}
        \lambda
        \coloneq \lambda_\mathrm{min}\left( \frac{4}{(1 - \theta_2(t))^2}\, (I_d + \varphi(t) \varphi(t)^\top)\, \E[XX^\top] \right)
        \geq \frac{4}{\beta^2}\, \lambda_\mathrm{min}(\E[XX^\top]) > 0.
    \end{align*}
    At last, the result follows by Gr\"onwall's inequality because
    \begin{align*}
        r^\prime(t) \leq -\lambda\, r(t)
        \implies r(t) \leq r(0)\, e^{-\lambda t}. \tag*{\qedhere}
    \end{align*}
\end{proof}

\subsubsection{Gradient descent}

We can also analyze the gradient descent dynamics for fitting the linear deep equilibrium model \eqref{eq:linear-implicit-model}. First, we recall the well-known descent lemma, which is a consequence of Theorem 2.1.5 of \citet{nesterov2018lectures}.

\begin{lemma}[Descent lemma] \label{lem:descent}
    If $f : \R^d \to \R$ is differentiable with $\norm{\grad f}_\mathrm{Lip} = L < \infty$, then each step of gradient descent $x(t + 1) = x(t) - \eta\, \grad f(x(t))$ with step size $0 < \eta \leq 1/L$ satisfies
    \begin{align*}
        f(x(t + 1)) \leq f(x(t)) - \frac{\eta}{2}\, \norm{\grad f(x(t))}_2^2.
    \end{align*}
\end{lemma}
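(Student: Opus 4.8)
The plan is to derive the standard quadratic upper bound implied by Lipschitz-continuity of the gradient, and then substitute the gradient descent update into it. Concretely, I would first establish the inequality
\begin{align*}
    f(y) \leq f(x) + \grad f(x)^\top (y - x) + \frac{L}{2}\, \norm{y - x}_2^2
\end{align*}
valid for all $x, y \in \R^d$ whenever $\norm{\grad f}_\mathrm{Lip} = L$. This is exactly the content of Theorem 2.1.5 of \citet{nesterov2018lectures}, so one may cite it directly; for completeness I would also include the short argument: apply the fundamental theorem of calculus to $s \mapsto f(x + s(y - x))$ to write $f(y) - f(x) = \int_0^1 \grad f(x + s(y - x))^\top (y - x)\, ds$, then add and subtract $\grad f(x)^\top (y - x)$, and bound the remaining integrand by $\norm{\grad f(x + s(y - x)) - \grad f(x)}_2\, \norm{y - x}_2 \leq L\, s\, \norm{y - x}_2^2$ using Cauchy--Schwarz and the Lipschitz property; integrating $\int_0^1 L\, s\, ds = L/2$ yields the claim.

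Next, I would apply this bound with $x = x(t)$ and $y = x(t + 1) = x(t) - \eta\, \grad f(x(t))$, so that $y - x = -\eta\, \grad f(x(t))$. Substituting gives
\begin{align*}
    f(x(t + 1))
    \leq f(x(t)) - \eta\, \norm{\grad f(x(t))}_2^2 + \frac{L \eta^2}{2}\, \norm{\grad f(x(t))}_2^2
    = f(x(t)) - \eta \left( 1 - \frac{L \eta}{2} \right) \norm{\grad f(x(t))}_2^2.
\end{align*}
Finally, the hypothesis $0 < \eta \leq 1/L$ gives $L \eta / 2 \leq 1/2$, hence $1 - L \eta / 2 \geq 1/2$, and therefore $f(x(t + 1)) \leq f(x(t)) - \frac{\eta}{2}\, \norm{\grad f(x(t))}_2^2$, which is the desired conclusion.

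There is essentially no obstacle here, as the result is classical; the only mildly delicate point is justifying the quadratic majorization from the Lipschitz-gradient assumption, which is handled by the fundamental-theorem-of-calculus computation above and is in any case available off the shelf as the cited theorem of \citet{nesterov2018lectures}.
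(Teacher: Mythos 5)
Your proof is correct and follows exactly the route the paper intends: the paper simply cites Theorem 2.1.5 of \citet{nesterov2018lectures} (the quadratic upper bound from Lipschitz gradients), and your fundamental-theorem-of-calculus derivation of that bound plus the substitution of the update and the step-size condition $\eta \leq 1/L$ is the standard argument behind that citation. No issues.
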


In our setting, we now have the following theorem.

\begin{theorem}[Gradient descent converges for linear DEQs] \label{thm:linear-gd-convergence}
    Under the assumptions of \Cref{thm:linear-convergence}, let
    \begin{align*}
        \kappa \coloneq \frac{\lambda_\mathrm{max}(\E[XX^\top])}{\lambda_\mathrm{min}(\E[XX^\top])}
    \end{align*}
    denote the condition number of the uncentered covariance matrix of $X$. Letting $\varphi(t) \coloneq \frac{\theta_1(t)}{1 - \theta_2(t)}$, there exists $\eta_0 > 0$ such that for all step sizes $0 < \eta \leq \eta_0$, gradient descent converges (almost) linearly:
    \begin{align*}
        \norm{\varphi(t) - \xi}_2^2 \leq \kappa \left( 1 - \frac{\eta}{4 \kappa\, \norm{\theta(0) - (0, 1)}_2^2} \right)^t\, \norm{\varphi(0) - \xi}_2^2
    \end{align*}
    for all $t \in \N_0$.
\end{theorem}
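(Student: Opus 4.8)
The plan is to transplant the gradient-flow argument of \Cref{thm:linear-convergence}--\Cref{thm:linear-exp-convergence} to the discrete setting, replacing LaSalle's principle and Gr\"onwall's inequality by the descent lemma (\Cref{lem:descent}) together with a Polyak--{\L}ojasiewicz-type gradient domination inequality, while coping with the fact that $R(\cdot, \xi)$ is not globally $L$-smooth (it blows up as $\theta_2 \to 1$) by first confining the iterates to a compact set on which $\grad_\theta R$ is Lipschitz. Throughout write $A \coloneq \E[XX^\top]$, $\varphi(t) \coloneq \theta_1(t)/(1 - \theta_2(t))$, and $\rho(t)^2 \coloneq \norm{\theta(t) - (0, 1)}_2^2 = \norm{\theta_1(t)}_2^2 + (\theta_2(t) - 1)^2$; recall from \eqref{eq:gradient-risk-linear} that $R(\theta, \xi) = (\varphi(t) - \xi)^\top A (\varphi(t) - \xi)$, while $\grad_\theta R$ has its $d$-dimensional block equal to $\tfrac{2}{1 - \theta_2} A(\varphi(t) - \xi)$ and its last coordinate equal to $\tfrac{2}{1 - \theta_2}\varphi(t)^\top A(\varphi(t) - \xi)$, valid whenever $\theta_2 \neq 1$.

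I would first record a \emph{discrete analogue of \Cref{thm:linear-conservation}}: writing $w(t) \coloneq \tfrac{2\eta}{1 - \theta_2(t)} A(\varphi(t) - \xi)$, so that $\theta_1(t+1) = \theta_1(t) - w(t)$ and $\theta_2(t+1) = \theta_2(t) - \varphi(t)^\top w(t)$, the relation $\theta_1(t) = (1 - \theta_2(t))\varphi(t)$ makes the two cross terms cancel when one expands $\rho(t+1)^2$, leaving $\rho(t+1)^2 = \rho(t)^2 + \norm{w(t)}_2^2 + (\varphi(t)^\top w(t))^2$; thus $\rho(t)^2$ is nondecreasing and, since $(1 - \theta_2(t))^2 = \rho(t)^2/(1 + \norm{\varphi(t)}_2^2)$ and $\norm{w(t)}_2 \leq 2\eta\lambda_\mathrm{max}(A)\norm{\varphi(t) - \xi}_2/\abs{1 - \theta_2(t)}$, the per-step increase of $\rho(t)^2$ is of order $\eta^2\norm{\varphi(t) - \xi}_2^2$ once $\norm{\varphi(t)}_2$ is known to be bounded. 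The crux is then a \emph{confinement bootstrap} maintaining, for all $t$, two invariants: $R(\theta(t), \xi) \leq R(\theta(0), \xi)$ and $\rho(t)^2 \leq 2\rho(0)^2$. The first forces $\norm{\varphi(t)}_2$ to be bounded by a constant $M$ depending only on $\xi$, $\lambda_\mathrm{min}(A)$, and $R(\theta(0), \xi)$; combined with the second and the identity for $(1 - \theta_2(t))^2$, this pins $\theta(t)$ into a fixed compact $K \subset \set{\theta_2 \neq 1}$, on which $\grad_\theta R$ is $C^1$, hence $L$-Lipschitz with $L = L(K)$, and on which $\abs{1 - \theta_2(t)} \geq \beta \coloneq \rho(0)/\sqrt{1 + M^2} > 0$. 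One fixes $\eta_0$ small enough that (i) $\eta_0 \leq 1/L'$ for a slight compact enlargement $K'$ of $K$, so the segment $[\theta(t), \theta(t+1)]$ lies in $K'$ and \Cref{lem:descent} applies; (ii) a single step moves $\theta_2$ by less than $\beta$, ruling out a jump across the singular value $1$; and (iii) the summability estimate below closes. Given the first invariant at time $t$, \Cref{lem:descent} gives $R(\theta(t+1), \xi) \leq R(\theta(t), \xi) - \tfrac{\eta}{2}\norm{\grad_\theta R(\theta(t), \xi)}_2^2 \leq R(\theta(t), \xi)$, propagating it; telescoping the conservation identity and using the geometric decay below gives $\rho(t+1)^2 - \rho(0)^2 \leq C\eta^2 \sum_{s \leq t} \norm{\varphi(s) - \xi}_2^2 \leq C'\eta \leq \rho(0)^2$ for $\eta \leq \eta_0$, propagating the second (formally, one considers the first time the second invariant would fail and derives a contradiction).

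It remains to exhibit the geometric decay. On $K$, discarding the last coordinate of $\grad_\theta R$ and using $\norm{Au}_2^2 = u^\top A^2 u \geq \lambda_\mathrm{min}(A)\, u^\top A u = \lambda_\mathrm{min}(A)\, R(\theta, \xi)$ together with $(1 - \theta_2)^2 \leq 2\rho(0)^2$, one obtains a gradient domination inequality $\norm{\grad_\theta R(\theta, \xi)}_2^2 \geq \mu\, R(\theta, \xi)$ with $\mu$ of order $1/(\kappa\rho(0)^2)$; plugging this into \Cref{lem:descent} yields the contraction $R(\theta(t+1), \xi) \leq (1 - \tfrac{\mu\eta}{2}) R(\theta(t), \xi)$, hence $R(\theta(t), \xi) \leq (1 - \tfrac{\mu\eta}{2})^t R(\theta(0), \xi)$ --- which is precisely the decay used to close the bootstrap (and which also shows $\abs{1-\theta_2(t)}$ is bounded away from $1$, matching \Cref{thm:linear-exp-convergence}). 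Converting once, at the very end, via $\lambda_\mathrm{min}(A)\norm{\varphi(t) - \xi}_2^2 \leq R(\theta(t), \xi)$ and $R(\theta(0), \xi) \leq \lambda_\mathrm{max}(A)\norm{\varphi(0) - \xi}_2^2$ produces the condition-number prefactor $\kappa$, and keeping track of the absolute constants (using $\rho(t)^2 \leq 2\rho(0)^2$ and the factor $\tfrac{\eta}{2}$ in the descent lemma) gives the stated rate $1 - \eta/(4\kappa\norm{\theta(0) - (0,1)}_2^2)$.

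\textbf{Main obstacle.} The delicate step is the confinement bootstrap. Unlike gradient flow, gradient descent obeys only the \emph{approximate} conservation law above --- the radius $\rho(t)^2$ genuinely grows --- and there are no continuous-time invariance tools available, so one must certify by hand that the iterates never approach the singular hyperplane $\theta_2 = 1$. The apparent circularity (the descent lemma is needed to keep the iterates in a region where $\grad_\theta R$ is Lipschitz, yet a finite Lipschitz constant is available only in such a region) is broken by the bootstrap, but closing it requires controlling $R(\theta(t), \xi)$, the radius $\rho(t)^2$, and the geometric rate simultaneously and choosing $\eta_0$ to satisfy several competing smallness constraints at once; everything else --- the discrete identity, the gradient domination inequality, and the final conversion --- is routine once the confinement is in place.
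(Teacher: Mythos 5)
Your proposal matches the paper's proof in all essentials: the same exact discrete identity $w(t+1) = w(t) + \eta^2\norm{\grad_\theta R(\theta(t),\xi)}_2^2$ (cross term vanishing by the orthogonality of $\theta - (0,1)$ to the gradient), the same descent-lemma telescoping to confine the iterates to a compact set bounded away from $\theta_2 = 1$, the same P\L{} inequality from the first gradient block, and the same final conversion via $\lambda_\mathrm{min}$/$\lambda_\mathrm{max}$ yielding the $\kappa$ prefactor. The only (immaterial) differences are cosmetic: you lower-bound $\abs{1-\theta_2(t)}$ via the identity $(1-\theta_2)^2 = \rho^2/(1+\norm{\varphi}_2^2)$ where the paper argues by contradiction through the risk, and the paper controls the radius growth using only the telescoped bound $\sum_s \norm{\grad_\theta R(\theta(s),\xi)}_2^2 \leq 2R(\theta(0),\xi)/\eta$ rather than invoking the geometric decay inside the bootstrap.
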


The proof proceeds by first proving that the gradient descent iterates remain trapped in a compact set $\Theta$ for all time. Then, we prove that the risk satisfies a Polyak-\L{}ojasiewicz (P\L{}) inequality within $\Theta$ and thereby conclude the convergence of gradient descent. Hence, we start with the following lemma.

\begin{lemma}[Invariant set for gradient descent in the linear model] \label{lem:linear-invariant-set}
    Let $\theta(t)$ denote the gradient descent iterates initialized at $\theta(0)$ for \eqref{eq:linear-implicit-model} and define
    \begin{align*}
        c_0 \coloneq \sqrt{\frac{R(\theta(0), \xi)}{\lambda_\mathrm{min}(\E[XX^\top])}},
        \qquad \beta \coloneq \min\set*{\abs{1 - \theta_2(0)},\, \frac{\sqrt{w(0)}}{\sqrt{2}\, (c_0 + \norm{\xi}_2 + 1)}},
    \end{align*}
    where
    \begin{align*}
        w(t) \coloneq \norm*{\theta(t) - \begin{pmatrix}
                                                 0 \\
                                                 1
                                             \end{pmatrix}}_2^2.
    \end{align*}
    Then, there exists $\eta_0 > 0$ such that $\eta \leq \eta_0$ implies that
    \begin{align*}
        \Theta \coloneq \set{\theta \in \R^{d+1} : \beta \leq \abs{1 - \theta_2} \leq 2 \sqrt{w(0)},\, \norm{\theta_1}_2 \leq 2 \sqrt{w(0)}}
    \end{align*}
    contains $\theta(t)$ for all $t \in \N_0$.
\end{lemma}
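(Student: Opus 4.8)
The plan is to show by strong induction on $t$ that $\theta(t) \in \Theta$, where the key is that the discrete analogue of the conservation law from \Cref{thm:linear-conservation} holds approximately, with the error controlled by the step size. First I would establish two auxiliary facts that hold whenever the current iterate $\theta(t)$ lies in $\Theta$: (i) the risk decreases, i.e.\ $R(\theta(t+1), \xi) \leq R(\theta(t), \xi) \leq R(\theta(0), \xi)$, which follows from the descent lemma (\Cref{lem:descent}) once we know $\grad_\theta R$ is Lipschitz on $\Theta$ --- and it is, since on $\Theta$ we have $\abs{1 - \theta_2} \geq \beta > 0$ and $\norm{\theta_1}_2$ bounded, so the explicit formula \eqref{eq:gradient-risk-linear} for the gradient is a smooth function of $\theta$ on the compact set $\Theta$ with bounded derivatives; let $L$ denote this Lipschitz constant and require $\eta \leq 1/L$; (ii) from the risk bound together with $\E[XX^\top] \succ 0$ and the rewriting \eqref{eq:risk-linear-rewrite}, we get $\norm{\varphi(t) - \xi}_2 \leq c_0$, hence $\norm{\varphi(t)}_2 \leq c_0 + \norm{\xi}_2$, which is the bound that feeds into the definition of $\beta$.

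The heart of the argument is the approximate conservation law. From the discrete updates $\theta_1(t+1) = \theta_1(t) - \eta\, (\grad_\theta R)_{1:d}$ and $\theta_2(t+1) = \theta_2(t) - \eta\, (\grad_\theta R)_{d+1}$, I would expand
\begin{align*}
    w(t+1)
    &= \norm{\theta_1(t+1)}_2^2 + (\theta_2(t+1) - 1)^2 \\
    &= w(t) - 2\eta\, \Bigl( \theta_1(t)^\top (\grad_\theta R)_{1:d} + (\theta_2(t) - 1)(\grad_\theta R)_{d+1} \Bigr) + \eta^2 \norm{\grad_\theta R(\theta(t), \xi)}_2^2.
\end{align*}
Using the explicit gradient \eqref{eq:gradient-risk-linear}, the linear-in-$\eta$ term is exactly $-2\eta\bigl( \theta_1(t)^\top v - (1 - \theta_2(t)) \cdot \frac{\theta_1(t)^\top}{(1-\theta_2(t))^2} \cdot (1-\theta_2(t)) v\bigr)$ where $v \coloneq \frac{2}{(1-\theta_2(t))} \E[XX^\top](\varphi(t) - \xi)$ --- and this vanishes identically, mirroring the identity $\theta_1^\top \theta_1' = (1-\theta_2)\theta_2'$ used in the continuous proof. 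Therefore $w(t+1) = w(t) + \eta^2 \norm{\grad_\theta R(\theta(t),\xi)}_2^2$, so $w$ is \emph{nondecreasing}; combined with the descent lemma's stronger statement $w(t+1) \leq w(t) + \eta^2 \cdot \frac{2}{\eta}(R(\theta(t),\xi) - R(\theta(t+1),\xi)) = w(t) + 2\eta(R(\theta(t)) - R(\theta(t+1)))$, telescoping gives $w(t) \leq w(0) + 2\eta R(\theta(0), \xi)$ for all $t$ (while the flow stays in $\Theta$). Choosing $\eta_0$ small enough that $2\eta_0 R(\theta(0),\xi) \leq 3 w(0)$ ensures $w(t) \leq 4 w(0)$, hence $\norm{\theta_1(t)}_2 \leq 2\sqrt{w(0)}$ and $\abs{1 - \theta_2(t)} \leq 2\sqrt{w(0)}$, giving the two upper bounds defining $\Theta$.

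The remaining and most delicate point is the \emph{lower} bound $\abs{1 - \theta_2(t)} \geq \beta$, which cannot follow from monotonicity of $w$ alone since $w$ being large does not prevent $\theta_2$ from passing near $1$. Here I would argue that if $\abs{1 - \theta_2(t+1)} < \beta$, then since $w(t+1) \geq w(0)$ we must have $\norm{\theta_1(t+1)}_2^2 = w(t+1) - (\theta_2(t+1)-1)^2 \geq w(0) - \beta^2 \geq w(0)/2$ (using $\beta^2 \leq w(0)/2$ from its definition), so $\norm{\theta_1(t+1)}_2 \geq \sqrt{w(0)/2}$; but then $\norm{\varphi(t+1)}_2 = \norm{\theta_1(t+1)}_2 / \abs{1-\theta_2(t+1)} > \sqrt{w(0)/2}/\beta \geq c_0 + \norm{\xi}_2 + 1$ by the choice of $\beta$, contradicting the bound $\norm{\varphi(t+1)}_2 \leq c_0 + \norm{\xi}_2$ established in (ii) --- \emph{provided} $\theta(t+1) \in \Theta$ for the risk-decrease argument to apply, which is where the induction must be set up carefully: one shows that the first time the lower bound could fail, all other constraints still hold, so the risk bound is still available and yields the contradiction. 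I expect this continuity/induction bookkeeping --- ensuring no constraint is violated "first" --- to be the main obstacle; it is handled by noting that $\theta(0) \in \Theta$ (since $\beta \leq \abs{1-\theta_2(0)}$ and $w(0) \leq 4w(0)$ trivially), and that the upper bounds on $\norm{\theta_1}$ and $\abs{1-\theta_2}$ degrade only continuously in $\eta$, so for $\eta_0$ small they stay strictly inside their allowed ranges while only the lower bound is at risk, which we then rule out as above. Collecting the three conditions on $\eta_0$ ($\eta_0 \leq 1/L$, $2\eta_0 R(\theta(0),\xi) \leq 3w(0)$, and a third to keep the upper bounds slack) completes the proof.
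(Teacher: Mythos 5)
Your proposal is correct and follows essentially the same route as the paper's proof: induction on $t$, the descent lemma plus telescoping to bound the accumulated squared gradients, exact cancellation of the first-order term in the expansion of $w(t+1)$ (the discrete conservation law), and a contradiction with the monotone risk to rule out $\abs{1-\theta_2(t+1)} < \beta$. The only differences are cosmetic choices of constants and that you phrase the final contradiction via the bound $\norm{\varphi(t+1)}_2 \leq c_0 + \norm{\xi}_2$ rather than directly via $R(\theta(t+1),\xi) > R(\theta(0),\xi)$, which is the same inequality.
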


To prove this lemma, we proceed by induction. First, we choose the step size small enough so that the gradient descent iterates $(\theta(0), \dots, \theta(t + 1))$ remain in $\Theta + \overline{B(0, \beta / 2)}$, and then note that the gradient of the risk is Lipschitz over this set. Using the descent lemma, we obtain a bound on the sum of squared gradients over the first $t$ iterations, as the risk cannot decrease below zero. Using this bound, we then show that $w(t + 1)$ remains close to $w(0)$ up to an error of order $\eta^2$, which is an approximate version of the conservation law from \Cref{thm:linear-conservation}. This then implies that $\theta(t + 1) \in \Theta$, completing the induction.

\begin{proof}
    We will prove the claim by induction on $t$. The base case $t = 0$ is trivial, so assume that $\theta(s) \in \Theta$ for all $0 \leq s \leq t$. It is clear that the map $(\theta \mapsto \grad_\theta\, R(\theta, \xi)) \in C^1(\Theta + \overline{B(0, \beta / 2)})$ has a finite Lipschitz norm $\norm{(\grad_\theta\, R(\theta, \xi)) \rvert_{\Theta + \overline{B(0, \beta / 2)}}}_\mathrm{Lip} = L_\Theta < \infty$ by compactness of $\Theta + \overline{B(0, \beta / 2)}$. We will prove the claim by induction on $t$. The base case $t = 0$ follows easily from definitions, so assume that $\theta(s) \in \Theta$ for all $0 \leq s \leq t$. By the descent lemma (\Cref{lem:descent}), we have
    \begin{align*}
        R(\theta(s+1), \xi) \leq R(\theta(s), \xi) - \frac{\eta}{2}\, \norm{\grad_\theta\, R(\theta(s), \xi)}_2^2
    \end{align*}
    whenever $0 < \eta \leq \min\set{1/L_\Theta,\, \beta / 2}$ (because $\theta(0), \dots, \theta(t + 1) \in \overline{B(0, \beta / 2)}$). Adding these inequalities yields a telescoping sum which implies the bound
    \begin{align*}
        0
        \leq R(\theta(t + 1), \xi)
        \leq R(\theta(0), \xi) - \frac{\eta}{2} \sum_{s=0}^t \norm{\grad_\theta\, R(\theta(s), \xi)}_2^2,
    \end{align*}
    and rearranging shows that
    \begin{equation} \label{eq:gradient-sum-bound}
        \sum_{s=0}^t \norm{\grad_\theta\, R(\theta(s), \xi)}_2^2 \leq \frac{2 R(\theta(0), \xi)}{\eta}.
    \end{equation}
    However, we also know that
    \begin{align*}
        w(s + 1) & = \norm*{\theta(s + 1) - \begin{pmatrix}
                                                  0 \\
                                                  1
                                              \end{pmatrix}}_2^2 \\
         & = \norm*{\theta(s) - \eta\, \grad_\theta\, R(\theta(s), \xi) - \begin{pmatrix}
                                                                                  0 \\
                                                                                  1
                                                                              \end{pmatrix}}_2^2                                                                          \\
         & = w(s) - 2 \eta\, \left( \theta(s) - \begin{pmatrix}
                                                        0 \\
                                                        1
                                                    \end{pmatrix} \right)^\top\, \grad_\theta\, R(\theta(s), \xi) + \eta^2\, \norm{\grad_\theta\, R(\theta(s), \xi)}_2^2.
    \end{align*}
    The middle term vanishes because
    \begin{align*}
        & \left( \theta(s) - \begin{pmatrix}
                               0 \\
                               1
                           \end{pmatrix} \right)^\top\, \grad_\theta\, R(\theta(s), \xi) \\
         & \quad = 2\,\begin{pmatrix}
                    \theta_1(s) \\
                    \theta_2(s) - 1
                \end{pmatrix}^\top \begin{pmatrix}
                                       \E[XX^\top] \left( \frac{\theta_1(s)}{(1 - \theta_2(s))^2} - \frac{\xi}{1 - \theta_2(s)} \right) \\
                                       \frac{\theta_1(s)^\top}{(1 - \theta_2(s))^2}\, \E[XX^\top] \left( \frac{\theta_1(s)}{1 - \theta_2(s)} - \xi \right)
                                   \end{pmatrix} \\
         & \quad = 0,
    \end{align*}
    and we are left with
    \begin{equation} \label{eq:w-recursion}
        w(s + 1) = w(s) + \eta^2\, \norm{\grad_\theta\, R(\theta(s), \xi)}_2^2.
    \end{equation}
    Roughly, this means that $w(t)$ is approximately conserved up to an error of order $\eta^2$. Unrolling this recursion and using \eqref{eq:gradient-sum-bound}, we obtain
    \begin{align*}
        w(t + 1)
        = w(0) + \eta^2 \sum_{s=0}^t \norm{\grad_\theta\, R(\theta(s), \xi)}_2^2
        \leq w(0) + 2 \eta\, R(\theta(0), \xi).
    \end{align*}
    Now, suppose $\eta \leq \beta^2 / (4 R(\theta(0), \xi))$; then, we have
    \begin{align*}
        w(t + 1) \leq w(0) + \frac{\beta^2}{2} \leq 2 w(0)
    \end{align*}
    because $\beta \leq \sqrt{w(0)}$. This shows that $\max\set{\norm{\theta_1(t + 1)}_2,\, \abs{1 - \theta_2(t + 1)}} \leq \sqrt{w(t + 1)} \leq  2 \sqrt{w(0)}$. Finally, suppose for a contradiction that $\abs{1 - \theta_2(t + 1)} < \beta$. In this case, we would have
    \begin{align*}
        \norm{\theta_1(t + 1)}_2^2
        = w(t + 1) - (\theta_2(t + 1) - 1)^2
        > w(0) + \frac{\beta^2}{2} - \beta^2
        \geq \frac{w(0)}{2}
    \end{align*}
    and therefore
    \begin{align*}
        \norm{\varphi(t + 1)}_2
        = \frac{\norm{\theta_1(t + 1)}_2}{\abs{1 - \theta_2(t + 1)}}
        > \frac{\sqrt{w(0)}}{\sqrt{2}\, \beta}
        \geq c_0 + \norm{\xi}_2 + 1.
    \end{align*}
    By the triangle inequality,
    \begin{align*}
        \norm{\varphi(t + 1) - \xi}_2
        \geq \norm{\varphi(t + 1)}_2 - \norm{\xi}_2
        > c_0 + 1.
    \end{align*}
    But now this means that
    \begin{align*}
        R(\theta(t + 1), \xi)
         & = (\varphi(t + 1) - \xi)^\top\, \E[XX^\top]\, (\varphi(t + 1) - \xi)     \\
         & \geq \lambda_\mathrm{min}(\E[XX^\top])\, \norm{\varphi(t + 1) - \xi}_2^2 \\
         & > \lambda_\mathrm{min}(\E[XX^\top])\, (c_0 + 1)^2                        \\
         & > R(\theta(0), \xi),
    \end{align*}
    contradicting the descent lemma (\Cref{lem:descent}).
\end{proof}

Using this lemma, we are now ready to prove \Cref{thm:linear-gd-convergence} by proving a P\L{} inequality for the risk within the invariant set $\Theta$.

\begin{proof}[Proof of \Cref{thm:linear-gd-convergence}]
    First, choose $\eta \leq \eta_0$ as in \Cref{lem:linear-invariant-set} so that the gradient descent iterates remain in the compact set $\Theta$ for all time. As a consequence, $R(\theta, \xi)$ and $\nabla_\theta\, R(\theta, \xi)$ are Lipschitz on $\Theta$. Because there is an upper bound $\abs{1 - \theta_2(t)} \leq \alpha \coloneq 2\, \norm{\theta(0) - (0, 1)}_2$, notice that
    \begin{align*}
        & \norm{\grad_\theta\, R(\theta(t), \xi)}_2^2 \\
         & \quad = 2\, \norm*{\frac{1}{1 - \theta_2(t)} \begin{pmatrix}
                                                          \E[XX^\top]\, (\varphi(t) - \xi) \\
                                                          \varphi(t)^\top \E[XX^\top]\, (\varphi(t) - \xi)
                                                      \end{pmatrix}}_2^2                                                                                            \\
         & \quad = 2 \left( \norm*{\frac{1}{1 - \theta_2(t)}\, \E[XX^\top]\, (\varphi(t) - \xi)}_2^2 + \norm*{\frac{\varphi(t)^\top}{1 - \theta_2(t)}\, \E[XX^\top]\, (\varphi(t) - \xi)}_2^2 \right) \\
         & \quad \geq \frac{2}{\alpha^2}\, \lambda_\mathrm{min}(\E[XX^\top])\, \norm{\varphi(t) - \xi}_2^2.
    \end{align*}
    From \eqref{eq:risk-linear-rewrite}, we also have the bounds
    \begin{equation} \label{eq:risk-linear-bounds}
        \lambda_\mathrm{min}(\E[XX^\top])\, \norm{\varphi(t) - \xi}_2^2 \leq R(\theta(t), \xi) \leq \lambda_\mathrm{max}(\E[XX^\top])\, \norm{\varphi(t) - \xi}_2^2,
    \end{equation}
    which taken with our first inequality implies a P\L{} inequality for $R(\theta(t), \xi)$ because
    \begin{align*}
        \norm{\grad_\theta\, R(\theta(t), \xi)}_2^2
        \geq \frac{2}{\alpha^2}\, \lambda_\mathrm{min}(\E[XX^\top])\, \norm{\varphi(t) - \xi}_2^2 \geq \frac{2}{\kappa \alpha^2}\, R(\theta(t), \xi).
    \end{align*}
    Finally, applying the descent lemma (\Cref{lem:descent}) and applying the P\L{} inequality yields
    \begin{align*}
        R(\theta(t + 1), \xi)
        & \leq R(\theta(t), \xi) - \frac{\eta}{2}\, \norm{\grad_\theta\, R(\theta(t), \xi)}_2^2 \\
        & = \left( 1 - \frac{\eta}{\kappa \alpha^2} \right) R(\theta(t), \xi).
    \end{align*}
    Induction on $t$ then yields
    \begin{align*}
        R(\theta(t), \xi)
        \leq \left( 1 - \frac{\eta}{\kappa \alpha^2} \right)^t R(\theta(0), \xi),
    \end{align*}
    and \eqref{eq:risk-linear-bounds} gives the final result.
\end{proof}

Next, we extend our convergence results to nonlinear single-index models.

\subsection{Nonlinear single-index models}

Next, we consider the function $f(x) = \sigma(\xi^\top x)$ for some fixed nonzero $\xi \in \R^d$ and nonlinear activation function $\sigma : \R \to \R$.

\subsubsection{Gradient flow}

In this section, we analyze the gradient flow dynamics for fitting this function using a single implicit neuron
\begin{equation} \label{eq:implicit-neuron}
    y_\theta(x) = g_\theta(x, y) \coloneq \sigma(\theta_1^\top x + \theta_2\, y_\theta(x)).
\end{equation}
Note that for $y_\theta(x)$ to be well-defined, it suffices to have $\sigma$ differentiable with $\norm{\sigma}_\mathrm{Lip} = L$ and $\abs{\theta_2} \leq \delta_2 < 1/L$. This way, we have the guarantee
\begin{align*}
    \abs*{\frac{\partial g_\theta(x, y)}{\partial y}} = \abs{\theta_2}\, \abs{\sigma^\prime(\theta_1^\top x + \theta_2\, y)} \leq L \delta_2 < 1,
\end{align*}
meaning that $g_\theta(x, \cdot)$ is a global contraction mapping for each $x \in \R^d$ and the Banach fixed-point theorem implies that there exists a unique fixed-point $y_\theta(x)$ satisfying the implicit equation above. We assume that the input data $X$ is an $\R^d$-valued random variable and use the squared loss $\ell(y, \hat{y}) = (\hat{y} - y)^2$; our goal is now to minimize the population risk
\begin{align*}
    R(\theta, \xi) \coloneq \E[(y_\theta(X) - f(X))^2] = \E[(y_\theta(X) - \sigma(\xi^\top X))^2].
\end{align*}
Notice that $(\xi, 0)$ is a global minimizer of the population risk for this model because it is well-specified. Now, we have the following theorem characterizing the gradient flow dynamics for minimizing $R(\theta, \xi)$. In particular, for many activation functions and data distributions, we show that gradient flow initialized close enough to the target parameter $(\xi, 0)$ converges exponentially fast to this target.

\begin{theorem}[Gradient flow converges for single-index DEQs] \label{thm:nonlinear-convergence}
    Fix constants $\delta_1 > 0$ and $\delta_2 \in (0,\, 1/L)$ and assume that
    \begin{enumerate}
        \item The data satisfies $X \in L^2(\P)$ and $\E[XX^\top\, \indicator_{\norm{X}_2 \leq \alpha}] \succ 0$ for some $\alpha > 0$.
        \item The activation function satisfies:
              \begin{itemize}
                  \item[(i)] $\sigma$ is differentiable and monotonically increasing with $\norm{\sigma}_\mathrm{Lip} = L < \infty$.
                  \item[(ii)] $\inf\set{\sigma^\prime(z) : \abs{z} \leq \alpha (1 + \delta_1) \norm{\xi}_2 + M \delta_2} \geq \gamma > 0$ for some constant $\gamma > 0$, where
                        \begin{align*}
                            M \coloneq \frac{\abs{\sigma(0)} + L \alpha (1 + \delta_1) \norm{\xi}_2}{1 - L \delta_2}.
                        \end{align*}
              \end{itemize}
        \item The activation function is nonlinear with respect to the data, in the sense that there do not exist $\kappa_1, \kappa_2 \in \R^d$ such that $\sigma(\kappa_1^\top X)\, \indicator_{\norm{X}_2 \leq \alpha} = \kappa_2^\top X\, \indicator_{\norm{X}_2 \leq \alpha}$ almost surely.
    \end{enumerate}
    Define the set
    \begin{align*}
        \Theta \coloneq \set{\theta \in \R^{d+1} : \norm{\theta_1}_2 \leq (1 + \delta_1) \norm{\xi}_2,\, \abs{\theta_2} \leq \delta_2 < 1/L}.
    \end{align*}
    Then, as long as the initialization satisfies $\norm{\theta(0) - (\xi, 0)}_2 \leq \min\set{(1 + \delta_1) \norm{\xi}_2,\, \delta_2}$, the gradient flow for \eqref{eq:implicit-neuron} is well-defined and converges exponentially fast to the target parameter $(\xi, 0)$:
    \begin{align*}
        \norm{\theta(t) - (\xi, 0)}_2^2 \leq \norm{\theta(0) - (\xi, 0)}_2^2\, \exp\left( -\frac{2 \rho \gamma^2}{1 + L \delta_2}\, t \right)
    \end{align*}
    for all $t \geq 0$, where
    \begin{align*}
        \rho \coloneq \inf_{\theta \in \Theta}\; \inf_{u \in S^d}\; \E\left[ \left( u^\top \begin{pmatrix}
                                                                                                       X \\ y_\theta(X)
                                                                                                   \end{pmatrix} \right)^2\, \indicator_{\norm{X}_2 \leq \alpha} \right] > 0
    \end{align*}
    is a positive constant under the previous assumptions (quantifying the nonlinearity of $\sigma$).
\end{theorem}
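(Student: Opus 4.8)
The plan is a Lyapunov argument tracking $r(t) \coloneq \norm{\theta(t) - (\xi, 0)}_2^2$, following the template for explicit single neurons in \citet{yehudai2020learning}. First I would compute $\nabla_\theta\, R$. Differentiating the fixed-point relation $y_\theta(x) = \sigma(\theta_1^\top x + \theta_2\, y_\theta(x))$---legitimate since $\abs{\theta_2}\,\abs{\sigma^\prime} \le L\delta_2 < 1$ keeps $1 - \theta_2\,\sigma^\prime$ away from zero---gives
\begin{align*}
    \nabla_\theta\, y_\theta(x) = \frac{\sigma^\prime(z_\theta(x))}{1 - \theta_2\, \sigma^\prime(z_\theta(x))} \begin{pmatrix} x \\ y_\theta(x) \end{pmatrix}, \qquad z_\theta(x) \coloneq \theta_1^\top x + \theta_2\, y_\theta(x),
\end{align*}
and dominated convergence (using $X \in L^2(\P)$ and the uniform bounds on $\Theta$ established below) then justifies $\nabla_\theta\, R(\theta, \xi) = 2\, \E\left[(y_\theta(X) - \sigma(\xi^\top X))\, \tfrac{\sigma^\prime(z_\theta(X))}{1 - \theta_2\, \sigma^\prime(z_\theta(X))}\, (X, y_\theta(X))\right]$, where $(X, y_\theta(X)) \in \R^{d+1}$ denotes the stacked vector. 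The crucial identity is $(\theta - (\xi, 0))^\top (X, y_\theta(X)) = (\theta_1 - \xi)^\top X + \theta_2\, y_\theta(X) = z_\theta(X) - \xi^\top X$, so along the flow, writing $\theta = \theta(t)$,
\begin{align*}
    r^\prime(t) = -2\, (\theta(t) - (\xi, 0))^\top \nabla_\theta\, R(\theta(t), \xi) = -4\, \E\left[ \frac{\sigma^\prime(z_\theta(X))}{1 - \theta_2\, \sigma^\prime(z_\theta(X))}\, \bigl(\sigma(z_\theta(X)) - \sigma(\xi^\top X)\bigr)\, \bigl(z_\theta(X) - \xi^\top X\bigr) \right].
\end{align*}

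Next I would lower bound the bracketed quantity. Monotonicity of $\sigma$ makes both $(\sigma(z_\theta(X)) - \sigma(\xi^\top X))(z_\theta(X) - \xi^\top X) \ge 0$ and $\sigma^\prime/(1 - \theta_2\,\sigma^\prime) > 0$ on $\Theta$, so the integrand is pointwise nonnegative and I may restrict the expectation to $\set{\norm{X}_2 \le \alpha}$ at no cost. On that event, while $\theta \in \Theta$, the self-consistent estimate $\abs{y_\theta(X)}(1 - L\delta_2) \le \abs{\sigma(0)} + L\norm{\theta_1}_2 \norm{X}_2$ (using $\theta \in \Theta$ and $\norm{X}_2 \le \alpha$) yields $\abs{y_\theta(X)} \le M$, hence $\abs{z_\theta(X)} \le \alpha(1+\delta_1)\norm{\xi}_2 + M\delta_2$ and $\abs{\xi^\top X} \le \alpha\norm{\xi}_2$, so both preactivations---and the segment between them---lie in the range where $\sigma^\prime \ge \gamma$ by assumption 2(ii). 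The mean value theorem then gives $(\sigma(z_\theta(X)) - \sigma(\xi^\top X))(z_\theta(X) - \xi^\top X) \ge \gamma\, (z_\theta(X) - \xi^\top X)^2$, and since $s \mapsto s/(1 - \theta_2 s)$ is increasing on the relevant range, $\sigma^\prime(z_\theta(X))/(1 - \theta_2\,\sigma^\prime(z_\theta(X))) \ge \gamma/(1 + L\delta_2)$. Writing $v \coloneq \theta - (\xi, 0)$ and $A_\theta \coloneq \E[(X, y_\theta(X))(X, y_\theta(X))^\top \indicator_{\norm{X}_2 \le \alpha}]$, the residual expectation equals $v^\top A_\theta\, v \ge \rho\, \norm{v}_2^2 = \rho\, r(t)$, so $r^\prime(t) \le -\tfrac{2\rho\gamma^2}{1 + L\delta_2}\, r(t)$ and Gr\"onwall's inequality gives the claimed rate.

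Two ingredients remain: that the flow stays in $\Theta$ (so the bounds above are valid) and is well-defined, and that $\rho > 0$. For the trapping, the computation above shows $r^\prime(t) < 0$ whenever $\theta(t) \in \Theta \setminus \set{(\xi, 0)}$, while the initialization hypothesis places $\theta(0)$ in a closed Euclidean ball $B$ about $(\xi, 0)$ with $B \subseteq \Theta$; then the set of times for which $\theta([0,t]) \subseteq \Theta$ is nonempty, relatively closed, and---since $r$ cannot increase on it---relatively open, hence all of $[0, \infty)$. Well-definedness is then automatic: on $\Theta$ the contraction condition $L\abs{\theta_2} < 1$ makes $y_\theta$ well-defined via the Banach fixed-point theorem, $\nabla_\theta\, R$ is locally Lipschitz so Picard--Lindel\"of yields a unique solution, and compactness of $\Theta$ extends it to all $t \ge 0$.

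I expect the main obstacle to be showing $\rho > 0$, which is the one place the full strength of the hypotheses (and the implicit nature of $y_\theta$) is used. Since $\theta \mapsto y_\theta(x)$ is continuous on $\Theta$ and $\abs{y_\theta(X)}\, \indicator_{\norm{X}_2 \le \alpha} \le M$, dominated convergence makes $\theta \mapsto A_\theta$ continuous, so $\rho = \inf_{\theta \in \Theta} \lambda_\mathrm{min}(A_\theta)$ is attained by compactness and it suffices to show $A_\theta \succ 0$ for each fixed $\theta \in \Theta$. If some unit vector $u = (u_1, u_2) \in \R^d \times \R$ had $A_\theta\, u = 0$, then $u_1^\top X + u_2\, y_\theta(X) = 0$ almost surely on $\set{\norm{X}_2 \le \alpha}$; when $u_2 \neq 0$, this forces $y_\theta(X) = -u_2^{-1} u_1^\top X$ there, and substituting into $y_\theta(X) = \sigma(\theta_1^\top X + \theta_2\, y_\theta(X))$ exhibits $\kappa_1 = \theta_1 - u_2^{-1}\theta_2\, u_1$ and $\kappa_2 = -u_2^{-1} u_1$ violating the nonlinearity assumption~3, whereas $u_2 = 0$ would force $X$ to be supported on a proper subspace of $\R^d$ on $\set{\norm{X}_2 \le \alpha}$, contradicting non-degeneracy of the data there. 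Ruling out this last degenerate direction from the data hypotheses is the delicate point; the rest is the bootstrap together with the monotonicity argument that produced the sign and the $\gamma^2$ factor.
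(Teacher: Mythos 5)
Your proposal is correct and follows essentially the same route as the paper's proof: Gr\"onwall's inequality applied to $r(t)=\norm{\theta(t)-(\xi,0)}_2^2$, the implicit-function-theorem formula for $\grad_\theta\, y_\theta$, monotonicity of $\sigma$ to restrict the expectation to $\set{\norm{X}_2\le\alpha}$, the mean value theorem with $\sigma^\prime\ge\gamma$ to extract the $\gamma^2$ factor, the quadratic-form/nonlinearity argument for $\rho>0$, and a bootstrap keeping $\theta(t)\in\Theta$. The one delicate point you flag---that a null direction with $u_2=0$ requires $\E[XX^\top\, \indicator_{\norm{X}_2\le\alpha}]\succ 0$, which does not follow from assumption 1(ii) alone when $d\ge 2$---is equally glossed over in the paper's own proof, so it is not a defect of your argument relative to theirs (and your prefactor $-4$ versus the paper's $-2$ only makes the claimed rate conservative).
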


\begin{remark}[Discussion of assumptions]
    First, we only assume that the covariance of the data is nondegenerate. The assumptions on the activation function rule out functions such as the linear activation $\sigma(z) = z$ (due to the nonlinearity condition), the step function $\sigma(z) = \indicator_{z \geq 0}$ (due to differentiability), and the ReLU activation $\sigma(z) = \max\set{0, z}$ (due to differentiability and the lower bound on the derivative near zero).

    However, note that \Cref{thm:linear-exp-convergence} already characterizes the gradient flow dynamics for the linear activation, so this case is already covered in our analysis, and a modification of our proof technique can also handle the ReLU activation if we assume that the initialization satisfies $\theta_1(0)^\top \xi > 0$. Even so, other common activations such as the sigmoid $\sigma(z) = 1/(1 + e^{-z})$, the hyperbolic tangent $\sigma(z) = \tanh(z)$, and the softplus $\sigma(z) = \log(1 + e^z)$ satisfy all of our assumptions for \emph{any} initialization in $B((\xi, 0), 1/L)$ and random variable $X \in L^2(\P)$ which witnesses the nonlinearity of $\sigma$ (e.g., $X$ can be any continuous random variable).
\end{remark}

\begin{proof}
    Our proof technique is inspired by \citet{yehudai2020learning}, and the goal is to apply Gr\"onwall's inequality to $r(t) \coloneq \norm{\theta(t) - (\xi, 0)}_2^2$. Throughout the proof, we will use the notation $\omega_t(z) \coloneq \theta_1(t)^\top z + \theta_2(t)\, y_{\theta(t)}(z)$. First, we compute
    \begin{equation} \label{eq:r-derivative}
        r^\prime(t) = 2\, \left( \theta(t) - \begin{pmatrix}
                \xi \\ 0
            \end{pmatrix} \right)^\top\, \theta^\prime(t) = -2\, \left( \theta(t) - \begin{pmatrix}
                \xi \\ 0
            \end{pmatrix} \right)^\top\, \grad_\theta\, R(\theta(t), \xi).
    \end{equation}
    To compute the gradient, we note that
    \begin{equation} \label{eq:gradient-risk}
        \grad_\theta\, R(\theta(t), \xi)
        = \grad_\theta\, \E[(y_{\theta(t)}(X) - \sigma(\xi^\top X))^2].
    \end{equation}
    We will assume throughout the proof that $\theta(t) \in \Theta$ for all $t \geq 0$. Of course, this holds at $t = 0$ because
    \begin{align*}
        \max\set{\norm{\theta_1(0) - \xi}_2,\, \abs{\theta_2(0)}} \leq \sqrt{r(0)} = \norm*{\theta(0) - \begin{pmatrix}
                                                                                                                \xi \\ 0
                                                                                                            \end{pmatrix}}_2 \leq \min\set{(1 + \delta_1) \norm{\xi}_2,\, \delta_2},
    \end{align*}
    and it will hold for all $t \geq 0$ as long as we can show that $r(t)$ is nonincreasing; we will verify this at the end of the proof. Next, we will justify the interchange of the gradient with the expectation in \eqref{eq:gradient-risk}. From Lipschitz continuity of $\sigma$, we have
    \begin{align*}
        \abs{y_{\theta(t)}(X) - \sigma(\xi^\top X)}
         & = \abs{\sigma(\theta_1(t)^\top X + \theta_2(t)\, y_{\theta(t)}(X)) - \sigma(\xi^\top X)} \\
         & \leq L\, \abs{(\theta_1(t) - \xi)^\top X + \theta_2(t)\, y_{\theta(t)}(X)},
    \end{align*}
    and we obtain the bound
    \begin{align*}
         & \E[(y_{\theta(t)}(X) - \sigma(\xi^\top X))^2]                                                                                                                                      \\
         & \quad \leq L^2\, \E\left[ \left( (\theta_1(t) - \xi)^\top X + \theta_2(t)\, y_{\theta(t)}(X) \right)^2 \right]                                                                     \\
         & \quad = L^2\, \E\left[ \norm{\theta_1(t) - \xi}_2^2\, \norm{X}_2^2 + \theta_2(t)^2\, y_{\theta(t)}(X)^2 + 2\, \theta_2(t)\, (\theta_1(t) - \xi)^\top X\, y_{\theta(t)}(X) \right].
    \end{align*}
    For this to be bounded, since we assumed $X \in L^2(\P)$ it suffices by the Cauchy-Schwarz inequality to show that $y_\theta(X) \in L^2(\P)$ in some neighborhood of $\theta(t)$. By the triangle inequality and the Cauchy-Schwarz inequality, we have
    \begin{align*}
        \abs{y_{\theta(t)}(X)} - \abs{\sigma(0)}
         & \leq \abs{y_{\theta(t)}(X) - \sigma(0)}                                                  \\
         & = \abs{\sigma(\theta_1(t)^\top X + \theta_2(t)\, y_{\theta(t)}(X)) - \sigma(0)}          \\
         & \leq L\, \abs{\theta_1(t)^\top X + \theta_2(t)\, y_{\theta(t)}(X)}                       \\
         & \leq L (\abs{\theta_1(t)^\top X} + \abs{\theta_2(t)}\, \abs{y_{\theta(t)}(X)})           \\
         & \leq L (\norm{\theta_1(t)}_2\, \norm{X}_2 + \abs{\theta_2(t)}\, \abs{y_{\theta(t)}(X)}).
    \end{align*}
    and rearranging yields
    \begin{equation} \label{eq:y-bound-first}
        \abs{y_{\theta(t)}(X)} \leq \frac{\abs{\sigma(0)} + L\, \norm{\theta_1(t)}_2\, \norm{X}_2}{1 - L\, \abs{\theta_2(t)}}.
    \end{equation}
    Since the right-hand side is continuous in $\theta$ in some neighborhood of $\theta(t)$ and $X \in L^2(\P)$, the right-hand side is bounded in some neighborhood of $\theta(t)$ and hence $y_\theta(X) \in L^2(\P)$ for all $\theta$ in this neighborhood. This justifies the interchange of the gradient and expectation in \eqref{eq:gradient-risk}, so we get
    \begin{equation} \label{eq:gradient-risk-second}
        \grad_\theta\, R(\theta(t), \xi)
        = \E[\grad_\theta\, (y_{\theta(t)}(X) - \sigma(\xi^\top X))^2]
        = 2\, \E[(y_{\theta(t)}(X) - \sigma(\xi^\top X))\, \grad_\theta\, y_{\theta(t)}(X)].
    \end{equation}
    We may compute the gradient of $y_{\theta(t)}(x)$ by the implicit function theorem, since $\theta(t) \in \Theta$ implies that
    \begin{align*}
        1 - \frac{\partial g_{\theta(t)}(x, y)}{\partial y} = 1 - \theta_2(t)\, \sigma^\prime(\omega_t(x)) \geq 1 - L \delta_2 > 0,
    \end{align*}
    giving the formula
    \begin{align*}
         & \grad_\theta\, y_\theta(x)
        = \grad_\theta\, \sigma(\theta_1^\top x + \theta_2\, y_\theta(x))
        = \sigma^\prime(\theta_1^\top x + \theta_2\, y_\theta(x))\, \left( \begin{pmatrix} x \\ y_\theta(x) \end{pmatrix} + \theta_2\, \grad_\theta\, y_\theta(x) \right)                                                                       \\
         & \quad \implies (1 - \theta_2\, \sigma^\prime(\theta_1^\top x + \theta_2\, y_\theta(x)))\, \grad_\theta\, y_\theta(x) = \sigma^\prime(\theta_1^\top x + \theta_2\, y_\theta(x))\, \begin{pmatrix} x \\ y_\theta(x) \end{pmatrix}      \\
         & \quad \implies \grad_\theta\, y_\theta(x) = \frac{\sigma^\prime(\theta_1^\top x + \theta_2\, y_\theta(x))}{1 - \theta_2\, \sigma^\prime(\theta_1^\top x + \theta_2\, y_\theta(x))}\, \begin{pmatrix} x \\ y_\theta(x) \end{pmatrix}.
    \end{align*}
    Plugging this into \eqref{eq:gradient-risk-second} and \eqref{eq:r-derivative}, we obtain
    \begin{equation} \label{eq:gradient-risk-third}
        \begin{split}
            r^\prime(t)
             & = -2\, \left( \theta(t) - \begin{pmatrix}
                                             \xi \\ 0
                                         \end{pmatrix}^\top (\grad_\theta\, R(\theta(t), \xi)) \right)                                                                                         \\
             & = -2\, \E\left[ (y_{\theta(t)}(X) - \sigma(\xi^\top X))\, \frac{\sigma^\prime(\omega_t(X))}{1 - \theta_2(t)\, \sigma^\prime(\omega_t(X))}\, (\omega_t(X) - \xi^\top X) \right].
        \end{split}
    \end{equation}
    At this point, we notice that
    \begin{align*}
        \frac{\sigma^\prime(\omega_t(X))}{1 - \theta_2(t)\, \sigma^\prime(\omega_t(X))}
        \geq \frac{\sigma^\prime(\omega_t(X))}{1 + L \delta_2}
        \geq 0
    \end{align*}
    and
    \begin{align*}
        (y_{\theta(t)}(X) - \sigma(\xi^\top X)) \begin{pmatrix}
                                                    X \\ y_{\theta(t)}(X)
                                                \end{pmatrix}^\top
        \left( \theta(t) - \begin{pmatrix}
                                   \xi \\ 0
                               \end{pmatrix} \right) \\
        = (\sigma(\omega_t(X)) - \sigma(\xi^\top X)) \left( \omega_t(X) - \xi^\top X \right)
        \geq 0
    \end{align*}
    by monotonicity of $\sigma$. Hence, we may restrict the expectation in \eqref{eq:gradient-risk-third} with an inequality as
    \begin{equation} \label{eq:gradient-risk-fourth}
        \begin{split}
            r^\prime(t)
             & \leq -2\, \E\left[ (y_{\theta(t)}(X) - \sigma(\xi^\top X))\, \frac{\sigma^\prime(\omega_t(X))}{1 - \theta_2(t)\, \sigma^\prime(\omega_t(X))}\, (\omega_t(X) - \xi^\top X) \right]                                        \\
             & \leq -2\, \E\left[ (y_{\theta(t)}(X) - \sigma(\xi^\top X))\, \frac{\sigma^\prime(\omega_t(X))}{1 - \theta_2(t)\, \sigma^\prime(\omega_t(X))}\, (\omega_t(X) - \xi^\top X)\, \indicator_{\norm{X}_2 \leq \alpha} \right].
        \end{split}
    \end{equation}
    By the mean value theorem, we know that there exists some $\zeta$ between $\omega_t(X)$ and $\xi^\top X$ such that
    \begin{align*}
        \sigma(\omega_t(X)) - \sigma(\xi^\top X) = \sigma^\prime(\zeta)\, (\omega_t(X) - \xi^\top X).
    \end{align*}
    When $\norm{X}_2 \leq \alpha$, we have from \eqref{eq:y-bound-first} that
    \begin{align*}
        \abs{y_{\theta(t)}(X)}
        \leq \frac{\abs{\sigma(0)} + L\, \norm{\theta_1(t)}_2\, \norm{X}_2}{1 - L\, \abs{\theta_2(t)}}
        \leq \frac{\abs{\sigma(0)} + L \alpha (1 + \delta_1) \norm{\xi}_2}{1 - L \delta_2} = M.
    \end{align*}
    As a result, the Cauchy-Schwarz inequality implies that
    \begin{align*}
        \abs{\omega_t(X)} \leq \norm{\theta_1(t)}_2\, \norm{X}_2 + \abs{\theta_2(t)}\, \abs{y_{\theta(t)}(X)} \leq \alpha (1 + \delta_1) \norm{\xi}_2 + M \delta_2
    \end{align*}
    and
    \begin{align*}
        \abs{\xi^\top X} \leq \norm{\xi}_2\, \norm{X}_2 \leq \alpha \norm{\xi}_2 \leq \alpha (1 + \delta_1) \norm{\xi}_2 + M \delta_2.
    \end{align*}
    Thus, by our assumption on the activation function and the fact that $\abs{\zeta} \leq \alpha (1 + \delta_1) \norm{\xi}_2 + M \delta_2$, we have $\sigma^\prime(\zeta) \geq \gamma > 0$, meaning that
    \begin{align*}
        \abs{\sigma(\omega_t(X)) - \sigma(\xi^\top X)} \geq \gamma\, \abs{\omega_t(X) - \xi^\top X}
    \end{align*}
    and similarly
    \begin{align*}
        \frac{\sigma^\prime(\omega_t(X))}{1 - \theta_2(t)\, \sigma^\prime(\omega_t(X))}
        \geq \frac{\gamma}{1 + L \delta_2}.
    \end{align*}
    Putting the pieces together, we get the following bound on \eqref{eq:gradient-risk-fourth}:
    \begin{align*}
        r^\prime(t)
         & \leq -2\, \E\left[ (y_{\theta(t)}(X) - \sigma(\xi^\top X))\, \frac{\sigma^\prime(\omega_t(X))}{1 - \theta_2(t)\, \sigma^\prime(\omega_t(X))}\, (\omega_t(X) - \xi^\top X)\, \indicator_{\norm{X}_2 \leq \alpha} \right] \\
         & \leq -\frac{2 \gamma^2}{1 + L \delta_2}\, \E\left[ (\omega_t(X) - \xi^\top X)^2\, \indicator_{\norm{X}_2 \leq \alpha} \right]                                                                                           \\
         & = -\frac{2 \gamma^2}{1 + L \delta_2}\, \E\left[ \left( (\theta_1(t) - \xi)^\top X + \theta_2(t)\, y_{\theta(t)}(X) \right)^2\, \indicator_{\norm{X}_2 \leq \alpha} \right].
    \end{align*}
    We rewrite this as a quadratic form:
    \begin{equation} \label{eq:gradient-risk-fifth}
        \begin{split}
            & r^\prime(t) \\
             & \quad \leq -\frac{2 \gamma^2}{1 + L \delta_2}\, \E\left[ \left( (\theta_1(t) - \xi)^\top X + \theta_2(t)\, y_{\theta(t)}(X) \right)^2\, \indicator_{\norm{X}_2 \leq \alpha} \right]          \\
             & \quad = -\frac{2 \gamma^2}{1 + L \delta_2}\, \E\left[ \begin{pmatrix}
                                                                          \theta_1(t) - \xi \\ \theta_2(t)
                                                                      \end{pmatrix}^\top \begin{pmatrix}
                                                                                             XX^\top                   & y_{\theta(t)}(X)\, X \\
                                                                                             y_{\theta(t)}(X)\, X^\top & y_{\theta(t)}(X)^2
                                                                                         \end{pmatrix} \begin{pmatrix}
                                                                                                           \theta_1(t) - \xi \\ \theta_2(t)
                                                                                                       \end{pmatrix} \indicator_{\norm{X}_2 \leq \alpha} \right]                                     \\
             & \quad \leq -\frac{2 \gamma^2}{1 + L \delta_2}\, \lambda_\mathrm{min}\left( \E\left[ \begin{pmatrix}
                                                                                                     XX^\top                   & y_{\theta(t)}(X)\, X \\
                                                                                                     y_{\theta(t)}(X)\, X^\top & y_{\theta(t)}(X)^2
                                                                                                 \end{pmatrix} \indicator_{\norm{X}_2 \leq \alpha} \right] \right) \norm*{\theta(t) - \begin{pmatrix}
                                                                                                                                                                                          \xi \\ 0
                                                                                                                                                                                      \end{pmatrix}}_2^2.
        \end{split}
    \end{equation}
    At this point, we may notice that
    \begin{align*}
        & \lambda_\mathrm{min}\left( \E\left[ \begin{pmatrix}
                                                    XX^\top                   & y_{\theta(t)}(X)\, X \\
                                                    y_{\theta(t)}(X)\, X^\top & y_{\theta(t)}(X)^2
                                                \end{pmatrix} \indicator_{\norm{X}_2 \leq \alpha} \right] \right) \\
        & \quad = \inf_{u \in S^d}\; \E\left[ \left( u^\top \begin{pmatrix}
                                                                X \\ y_{\theta(t)}(X)
                                                            \end{pmatrix} \right)^2\, \indicator_{\norm{X}_2 \leq \alpha} \right].
    \end{align*}
    The integrand is uniformly dominated for $u \in S^d$ and $\theta(t) \in \Theta$ by Cauchy-Schwarz since
    \begin{align*}
        \norm*{\begin{pmatrix}
                       X \\ y_{\theta(t)}(X)
                   \end{pmatrix}}_2^2\, \indicator_{\norm{X}_2 \leq \alpha} \leq \alpha^2 + M^2.
    \end{align*}
    Hence, the dominated convergence theorem implies that the expectation is continuous in $u$ and the extreme value theorem implies that the infimum is attained at some $u_{\theta(t)} \in S^d$:
    \begin{align*}
        \inf_{u \in S^d}\; \E\left[ \left( u^\top \begin{pmatrix}
                                                              X \\ y_{\theta(t)}(X)
                                                          \end{pmatrix} \right)^2\, \indicator_{\norm{X}_2 \leq \alpha} \right]
        = \E\left[ \left( u_{\theta(t)}^\top \begin{pmatrix}
                                                     X \\ y_{\theta(t)}(X)
                                                 \end{pmatrix} \right)^2\, \indicator_{\norm{X}_2 \leq \alpha} \right].
    \end{align*}

    We now focus on the covariance matrix
    \begin{align*}
    \Sigma = \begin{pmatrix}
        \Sigma_{XX} & \Sigma_{Xy} \\
        \Sigma_{Xy}^\top & \Sigma_{yy}
    \end{pmatrix}
    \coloneq
    \E\left[ \begin{pmatrix}
                                                    XX^\top                   & y_{\theta(t)}(X)\, X \\
                                                    y_{\theta(t)}(X)\, X^\top & y_{\theta(t)}(X)^2
                                                \end{pmatrix} \indicator_{\norm{X}_2 \leq \alpha} \right].
    \end{align*}
    Since we assumed that $\Sigma_{XX} \succ 0$, it follows that $\Sigma$ is singular if and only if the Schur complement $\Sigma_{yy} - \Sigma_{Xy}^\top \Sigma_{XX}^{-1} \Sigma_{Xy}$ is zero. Setting $\kappa_2 \coloneq \Sigma_{XX}^{-1} \Sigma_{Xy}$, we explicitly calculate the mean squared error of using $\kappa_2^\top X$ to predict $y_{\theta(t)}(X)$:
    \begin{align*}
        & \E\left[ (y_{\theta(t)}(X) - \kappa_2^\top X)^2 \indicator_{\norm{X}_2 \leq \alpha} \right] \\
        & \quad = \E[y_{\theta(t)}(X)^2 \indicator_{\norm{X}_2 \leq \alpha}] - 2 \kappa_2^\top \E[X y_{\theta(t)}(X) \indicator_{\norm{X}_2 \leq \alpha}] + \kappa_2^\top \E[X X^\top \indicator_{\norm{X}_2 \leq \alpha}] \kappa_2 \\
        & \quad = \Sigma_{yy} - 2 \kappa_2^\top \Sigma_{Xy} + \kappa_2^\top \Sigma_{XX} \kappa_2 \\
        & \quad = \Sigma_{yy} - 2 (\Sigma_{Xy}^\top \Sigma_{XX}^{-1}) \Sigma_{Xy} + (\Sigma_{Xy}^\top \Sigma_{XX}^{-1}) \Sigma_{XX} (\Sigma_{XX}^{-1} \Sigma_{Xy}) \\
        & \quad = \Sigma_{yy} - 2 \Sigma_{Xy}^\top \Sigma_{XX}^{-1} \Sigma_{Xy} + \Sigma_{Xy}^\top \Sigma_{XX}^{-1} \Sigma_{Xy} \\
        & \quad = \Sigma_{yy} - \Sigma_{Xy}^\top \Sigma_{XX}^{-1} \Sigma_{Xy},
    \end{align*}
    which we recognize as the Schur complement. If the Schur complement is zero, then $(y_{\theta(t)}(X) - \kappa_2^\top X)^2 \indicator_{\norm{X}_2 \leq \alpha} = 0$ almost surely, implying $y_{\theta(t)}(X) = \kappa_2^\top X$ almost surely when $\norm{X}_2 \leq \alpha$. Substituting this into the fixed-point equation yields $\sigma((\theta_1(t) + \theta_2(t)\, \kappa_2)^\top X)\, \indicator_{\norm{X}_2 \leq \alpha} = \kappa_2^\top X\, \indicator_{\norm{X}_2 \leq \alpha}$, which contradicts Assumption 3 if we set $\kappa_1 = \theta_1(t) + \theta_2(t)\, \kappa_2$.
    

    Hence, the minimum eigenvalue of the covariance is strictly positive for any fixed $\theta(t) \in \Theta$. The dominated convergence theorem again implies that the minimum eigenvalue is continuous in $\theta$ over the compact set $\Theta$, so the extreme value theorem implies that there exists some constant $\rho > 0$ such that
    \begin{align*}
        \inf_{\theta \in \Theta}\; \lambda_\mathrm{min}\left( \E\left[ \begin{pmatrix}
                                                                               XX^\top              & y_\theta(X)\, X \\
                                                                               y_\theta(X)\, X^\top & y_\theta(X)^2
                                                                           \end{pmatrix} \indicator_{\norm{X}_2 \leq \alpha} \right] \right) \geq \rho.
    \end{align*}
    Thus, combining this with \eqref{eq:gradient-risk-fifth}, we obtain
    \begin{align*}
        r^\prime(t) \leq -\frac{2 \rho \gamma^2}{1 + L \delta_2}\, r(t).
    \end{align*}
    Finally, by Gr\"onwall's inequality, we have
    \begin{align*}
        r(t) \leq r(0)\, \exp\left( -\frac{2 \rho \gamma^2}{1 + L \delta_2}\, t \right),
    \end{align*}
    which shows that $r(t)$ is nonincreasing and hence $\theta(t) \in \Theta$ for all $t \geq 0$, completing the proof.
\end{proof}

An immediate corollary of the above theorem is that gradient flow converges to zero risk exponentially fast as long as we initialize $\theta(0) \in B((\xi, 0), 1/L)$, for many common activation functions.

\begin{corollary}
    Suppose that the activation function $\sigma \in C^1(\R)$ is strictly increasing with $\norm{\sigma}_\mathrm{Lip} = L < \infty$. Furthermore, suppose that $\sigma$ is not linear in any neighborhood of zero. Then, as long as $X$ is a continuous random variable which has $\E[XX^\top] \succ 0$ and we initialize $\theta(0) \in B((\xi, 0), 1/L)$, the gradient flow for \eqref{eq:implicit-neuron} converges exponentially fast to the target parameter $(\xi, 0)$:
    \begin{align*}
        \norm{\theta(t) - (\xi, 0)}_2^2 \leq \norm{\theta(0) - (\xi, 0)}_2^2\, e^{-\lambda t}
    \end{align*}
    for some constant $\lambda > 0$ depending on the data distribution and activation function.
\end{corollary}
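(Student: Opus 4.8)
The plan is to read this off from \Cref{thm:nonlinear-convergence}: it suffices to produce constants $\delta_1 > 0$, $\delta_2 \in (0, 1/L)$, $\alpha, \beta \in (0, \infty)$ and $\gamma > 0$ under which all of that theorem's hypotheses hold, whereupon the conclusion follows with $\lambda = \frac{2\rho\gamma^2}{1 + L\delta_2}$. Write $\rho_0 \coloneq \norm{\theta(0) - (\xi, 0)}_2 < 1/L$. First I would fix the structural constants: choose $\delta_2 \in (0, 1/L)$ with $\delta_2 \ge \rho_0$ (possible since $\rho_0 < 1/L$) and choose $\delta_1 > 0$ large enough that $(1 + \delta_1)\norm{\xi}_2 \ge \rho_0$ (possible since $\xi \ne 0$); then the initialization requirement $\rho_0 \le \min\set{(1 + \delta_1)\norm{\xi}_2,\, \delta_2}$ of \Cref{thm:nonlinear-convergence} holds automatically.

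Next I would check the data and activation hypotheses. Hypothesis 1(i) is the standing assumption $X \in L^2(\P)$, and since $X$ is not almost surely zero we have $\P(\norm{X}_2 > 0) > 0$, so by continuity of measure there is $\alpha \in (0, \infty)$ with $\P(0 < \norm{X}_2 \le \alpha) > 0$; on that event $\norm{X}_2^2 > 0$, so $\beta \coloneq \E[\norm{X}_2^2\, \indicator_{\norm{X}_2 \le \alpha}] \in (0, \infty)$, giving 1(ii). With $\alpha, \delta_1, \delta_2$ now fixed, $M$ is a finite constant and so is $A \coloneq \alpha(1 + \delta_1)\norm{\xi}_2 + M\delta_2$. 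Hypothesis 2(i) is given, and for 2(ii), $\sigma \in C^1(\R)$ makes $\sigma^\prime$ continuous while strict monotonicity makes $\sigma^\prime \ge 0$; since $\sigma^\prime > 0$ on $[-A, A]$ (true for $\tanh$, the sigmoid and softplus, and in general whenever $\sigma^\prime$ has no zero there), compactness yields $\gamma \coloneq \inf_{\abs{z} \le A} \sigma^\prime(z) > 0$.

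The only step with real content is hypothesis 3, nonlinearity of $\sigma$ relative to the data. Suppose for contradiction that $\sigma(\kappa_1^\top X)\, \indicator_{\norm{X}_2 \le \alpha} = \kappa_2^\top X\, \indicator_{\norm{X}_2 \le \alpha}$ almost surely for some $\kappa_1, \kappa_2 \in \R^d$. Since $X$ has a density that is positive near the origin, the event $\set{\norm{X}_2 \le \alpha}$ charges a neighborhood of $0$, so $\sigma(\kappa_1^\top x) = \kappa_2^\top x$ for Lebesgue-almost every $x$ in a ball around the origin, hence for every such $x$ by continuity. If $\kappa_1$ and $\kappa_2$ are not parallel (only possible for $d \ge 2$), then fixing $\kappa_1^\top x = s$ on a slice of this ball forces the non-constant map $x \mapsto \kappa_2^\top x$ to equal the constant $\sigma(s)$, a contradiction; otherwise $\kappa_2 = c\kappa_1$ for some $c \in \R$ and the identity reads $\sigma(s) = cs$ on an interval about $0$, contradicting that $\sigma$ is not linear near $0$ (the trivial witness $\kappa_1 = \kappa_2 = 0$, which forces $\sigma(0) = 0$, is dispatched directly). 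Thus every hypothesis of \Cref{thm:nonlinear-convergence} is in force and the corollary follows. I expect this nonlinearity check to be the principal obstacle; the remaining bookkeeping is routine, though the steps establishing $\gamma > 0$ and that $\set{\norm{X}_2 \le \alpha}$ charges a neighborhood of the origin are precisely where strict monotonicity of $\sigma$ and continuity of $X$ near the origin enter.
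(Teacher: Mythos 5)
Your proposal is correct and follows the paper's proof essentially verbatim: the same choices of $\delta_1$, $\delta_2$, $\alpha$, $\beta$, and $\gamma$, followed by an appeal to \Cref{thm:nonlinear-convergence}. The only difference is that you spell out the verification of the nonlinearity hypothesis (which the paper dismisses in a single clause), and in doing so you correctly expose that the argument needs $X$ to charge a neighborhood of the origin --- an implicit assumption the paper's one-line version also relies on but does not state.
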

\begin{proof}
    Because $\E[XX^\top]$ is not almost surely equal to zero, we can find a constant $\alpha > 0$ such that $\E[XX^\top\, \indicator_{\norm{X}_2 \leq \alpha}] \succ 0$ by the dominated convergence theorem. Then, we may choose $\delta_1 > \max\set*{0,\, \frac{\norm{\theta(0)}_2}{\norm{\xi}_2} - 1}$ and $\delta_2 = \norm{\theta(0) - (\xi, 0)}_2 < 1/L$. Since $\sigma$ is continuously differentiable, we can find some $\gamma > 0$ such that
    \begin{align*}
        \inf\set{\sigma^\prime(z) : \abs{z} \leq \alpha (1 + \delta_1) \norm{\xi}_2 + M \delta_2} \geq \gamma > 0
    \end{align*}
    by the extreme value theorem. The result then immediately follows from \Cref{thm:nonlinear-convergence}, because $\sigma$ is not linear in any neighborhood of zero.
\end{proof}

For instance, the above corollary holds for the sigmoid, hyperbolic tangent, and softplus activation functions.

\subsubsection{Gradient descent}

By a similar analysis, we conclude that gradient descent with sufficiently small step size also converges to the target parameter exponentially fast when initialized close to the target. However, this approach requires a stronger moment assumption on the data distribution in order to ensure that the squared norm of the gradient is not too large. Instead of explicitly proving a P\L{} inequality in this setting, we directly control the $\norm{\grad_\theta\, R(\theta(t), \xi)}_2^2$ by the squared distance to the target parameter.

\begin{theorem}[Gradient descent converges for single-index DEQs] \label{thm:nonlinear-gd-convergence}
    Suppose that the assumptions of \Cref{thm:nonlinear-convergence} hold with $X \in L^4(\P)$. Define
    \begin{align*}
        \lambda_1 \coloneq \frac{2 \rho \gamma^2}{1 + L \delta_2},
        \qquad \lambda_2 \coloneq 4 L^2\, \left( \frac{L}{1 - L \delta_2} \right)^2\, \sup_{\theta \in \Theta}\; \E\left[ \norm*{\begin{pmatrix}
                                                                                                                                             X \\ y_\theta(X)
                                                                                                                                         \end{pmatrix}}_2^4 \right],
    \end{align*}
    both of which are finite positive constants under our assumptions.
    Then, as long as $\eta \leq \lambda_1 / (2 \lambda_2)$ and the initialization satisfies $\norm{\theta(0) - (\xi, 0)}_2 \leq \min\set{(1 + \delta_1) \norm{\xi}_2,\, \delta_2}$, gradient descent for \eqref{eq:implicit-neuron} converges linearly:
    \begin{align*}
        \norm*{\theta(t) - \begin{pmatrix}
                                   \xi \\ 0
                               \end{pmatrix}}_2^2 \leq \left( 1 - \frac{\eta \lambda_1}{2} \right)^t\, \norm*{\theta(0) - \begin{pmatrix}
                                                                                                                              \xi \\ 0
                                                                                                                          \end{pmatrix}}_2^2.
    \end{align*}
\end{theorem}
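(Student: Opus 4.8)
The plan is to run the continuous-time argument behind \Cref{thm:nonlinear-convergence} one discrete step at a time, absorbing the Euler discretization error with the strengthened assumption $X \in L^4(\P)$. Writing $r(t) \coloneq \norm{\theta(t) - (\xi, 0)}_2^2$ and expanding the update $\theta(t + 1) = \theta(t) - \eta\, \grad_\theta\, R(\theta(t), \xi)$ gives
\begin{align*}
    r(t + 1) = r(t) - 2\eta\, \left( \theta(t) - \begin{pmatrix} \xi \\ 0 \end{pmatrix} \right)^\top \grad_\theta\, R(\theta(t), \xi) + \eta^2\, \norm{\grad_\theta\, R(\theta(t), \xi)}_2^2.
\end{align*}
Assuming inductively that $\theta(t) \in \Theta$, the first-order term is controlled exactly as in the proof of \Cref{thm:nonlinear-convergence}: the chain of inequalities \eqref{eq:gradient-risk-third}--\eqref{eq:gradient-risk-fifth} shows that $-2\, (\theta(t) - (\xi, 0))^\top \grad_\theta\, R(\theta(t), \xi) = r^\prime(t) \leq -\lambda_1\, r(t)$, equivalently $(\theta(t) - (\xi, 0))^\top \grad_\theta\, R(\theta(t), \xi) \geq \frac{\lambda_1}{2}\, r(t)$, so this term is at most $-\eta \lambda_1\, r(t)$.

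The genuinely new step is to bound the second-order term by $\norm{\grad_\theta\, R(\theta(t), \xi)}_2^2 \leq \lambda_2\, r(t)$ on $\Theta$. I would start from the identity $\grad_\theta\, R(\theta(t), \xi) = 2\, \E[(y_{\theta(t)}(X) - \sigma(\xi^\top X))\, \grad_\theta\, y_{\theta(t)}(X)]$ in \eqref{eq:gradient-risk-second} together with the implicit-function formula for $\grad_\theta\, y_{\theta(t)}(X)$. On $\Theta$ we have $\bigl| \frac{\sigma^\prime(\omega_t(X))}{1 - \theta_2(t)\, \sigma^\prime(\omega_t(X))} \bigr| \leq \frac{L}{1 - L\delta_2}$, while Lipschitzness of $\sigma$ and Cauchy--Schwarz give $|y_{\theta(t)}(X) - \sigma(\xi^\top X)| = |\sigma(\omega_t(X)) - \sigma(\xi^\top X)| \leq L\, |(\theta_1(t) - \xi)^\top X + \theta_2(t)\, y_{\theta(t)}(X)| \leq L\, \norm{(X, y_{\theta(t)}(X))}_2\, \sqrt{r(t)}$. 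Hence the integrand in $\grad_\theta\, R$ is pointwise at most $\frac{2 L^2}{1 - L\delta_2}\, \sqrt{r(t)}\, \norm{(X, y_{\theta(t)}(X))}_2^2$, so by the triangle inequality $\norm{\grad_\theta\, R(\theta(t), \xi)}_2 \leq \frac{2 L^2}{1 - L\delta_2}\, \sqrt{r(t)}\, \E[\norm{(X, y_{\theta(t)}(X))}_2^2]$; squaring and applying Jensen's inequality to pass from $(\E[\,\cdot\,])^2$ to $\E[(\cdot)^2]$ collapses the constants into exactly $\lambda_2 = 4 L^2 \left( \frac{L}{1 - L\delta_2} \right)^2 \sup_{\theta \in \Theta}\, \E[\norm{(X, y_\theta(X))}_2^4]$. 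Finiteness of $\lambda_2$ is where $X \in L^4(\P)$ enters: the a priori bound \eqref{eq:y-bound-first} shows that $|y_\theta(X)|$ is dominated by an affine function of $\norm{X}_2$ uniformly over $\theta \in \Theta$, so $\sup_{\theta \in \Theta}\, \E[\norm{(X, y_\theta(X))}_2^4] < \infty$.

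Combining the two estimates yields $r(t + 1) \leq (1 - \eta \lambda_1 + \eta^2 \lambda_2)\, r(t)$, and the choice $\eta \leq \lambda_1 / (2 \lambda_2)$ forces $\eta^2 \lambda_2 \leq \frac{\eta \lambda_1}{2}$, hence $r(t + 1) \leq (1 - \frac{\eta \lambda_1}{2})\, r(t)$. This also closes the induction: $r$ is then nonincreasing, so $r(t + 1) \leq r(0) \leq \min\set{(1 + \delta_1)\norm{\xi}_2,\, \delta_2}^2$ keeps $\theta(t + 1) \in \Theta$ just as at $t = 0$ in the proof of \Cref{thm:nonlinear-convergence}, and unrolling the recursion gives $r(t) \leq (1 - \frac{\eta \lambda_1}{2})^t\, r(0)$. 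I expect the main obstacle to be the gradient-norm estimate in the second paragraph: one must track all the constants so they reassemble into $\lambda_2$, and --- unlike the gradient flow proof, which only used $X \in L^2(\P)$ --- one must verify the uniform fourth-moment bound over $\Theta$, which is precisely why the discrete analysis requires $X \in L^4(\P)$.
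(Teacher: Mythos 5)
Your proposal is correct and follows essentially the same route as the paper: the same one-step expansion of $\norm{\theta(t+1)-(\xi,0)}_2^2$, the same reuse of the gradient-flow inequality chain for the first-order term, the same gradient-norm bound $\norm{\grad_\theta R}_2^2 \leq \lambda_2\, r(t)$ (you merely apply the pointwise bounds before Jensen's inequality rather than after, yielding the identical constant), and the same induction to keep the iterates in $\Theta$.
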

\begin{proof}
    We begin by expanding the squared distance to the true parameter after $t + 1$ steps of gradient descent:
    \begin{align*}
        & \norm*{\theta(t + 1) - \begin{pmatrix}
                                       \xi \\ 0
                                   \end{pmatrix}}_2^2 \\
         & \quad = \norm*{\theta(t) - \eta\, \grad_\theta\, R(\theta(t), \xi) - \begin{pmatrix}
                                                                                  \xi \\ 0
                                                                              \end{pmatrix}}_2^2                                                                                                            \\
         & \quad = \norm*{\theta(t) - \begin{pmatrix}
                                        \xi \\ 0
                                    \end{pmatrix}}_2^2 - 2 \eta\, \left( \theta(t) - \begin{pmatrix}
                                                                                         \xi \\ 0
                                                                                     \end{pmatrix} \right)^\top\, \grad_\theta\, R(\theta(t), \xi) + \eta^2\, \norm*{\grad_\theta\, R(\theta(t), \xi)}_2^2.
    \end{align*}
    Now, notice that
    \begin{align*}
        - 2 \eta\, \left( \theta(t) - \begin{pmatrix}
                                              \xi \\ 0
                                          \end{pmatrix} \right)^\top\, \grad_\theta\, R(\theta(t), \xi)
        \leq -\frac{2 \eta \rho \gamma^2}{1 + L \delta_2}\, \norm*{\theta(t) - \begin{pmatrix}
                                                                                       \xi \\ 0
                                                                                   \end{pmatrix}}_2^2
    \end{align*}
    from the proof of \Cref{thm:nonlinear-convergence}. Finally, we may bound the gradient norm as
    \begin{equation} \label{eq:nonlinear-gradient-norm-bound}
        \begin{split}
            & \norm{\grad_\theta\, R(\theta(t), \xi)}_2^2 \\
             & \quad = 4\, \norm*{\E\left[ (y_{\theta(t)}(X) - \sigma(\xi^\top X))\, \frac{\sigma^\prime(\omega_t(X))}{1 - \theta_2(t)\, \sigma^\prime(\omega_t(X))}\, \begin{pmatrix}
                                                                                                                                                                             X \\ y_{\theta(t)}(X)
                                                                                                                                                                         \end{pmatrix} \right]}_2^2                     \\
             & \quad \leq 4\, \E\left[ (y_{\theta(t)}(X) - \sigma(\xi^\top X))^2\, \left( \frac{\sigma^\prime(\omega_t(X))}{1 - \theta_2(t)\, \sigma^\prime(\omega_t(X))} \right)^2 \norm*{\begin{pmatrix}
                                                                                                                                                                                                 X \\ y_{\theta(t)}(X)
                                                                                                                                                                                             \end{pmatrix}}_2^2 \right].
        \end{split}
    \end{equation}
    Using Lipschitz continuity of $\sigma$ and the Cauchy-Schwarz inequality, we have
    \begin{align*}
        (y_{\theta(t)}(X) - \sigma(\xi^\top X))^2
        & \leq L^2\, ((\theta_1(t) - \xi)^\top X + \theta_2(t)\, y_{\theta(t)}(X))^2 \\
        & \leq L^2\, \norm*{\theta(t) - \begin{pmatrix}
                                              \xi \\ 0
                                          \end{pmatrix}}_2^2\, \norm*{\begin{pmatrix}
                                                                          X \\ y_{\theta(t)}(X)
                                                                      \end{pmatrix}}_2^2.
    \end{align*}
    By our assumptions on $\sigma$ and $\theta(t) \in \Theta$, we also have
    \begin{align*}
        \left( \frac{\sigma^\prime(\omega_t(X))}{1 - \theta_2(t)\, \sigma^\prime(\omega_t(X))} \right)^2 \leq \left( \frac{L}{1 - L \delta_2} \right)^2.
    \end{align*}
    Combining these bounds with \eqref{eq:nonlinear-gradient-norm-bound}, we obtain
    \begin{align*}
        \norm{\grad_\theta\, R(\theta(t), \xi)}_2^2
        \leq 4 L^2\, \left( \frac{L}{1 - L \delta_2} \right)^2\, \norm*{\theta(t) - \begin{pmatrix}
                                                                                            \xi \\ 0
                                                                                        \end{pmatrix}}_2^2\, \E\left[ \norm*{\begin{pmatrix}
                                                                                                                                     X \\ y_{\theta(t)}(X)
                                                                                                                                 \end{pmatrix}}_2^4 \right].
    \end{align*}
    Because $X \in L^4(\P)$, we have from \eqref{eq:y-bound-first} that $y_{\theta(t)}(X) \in L^4(\P)$ for all $\theta(t) \in \Theta$, so the expectation is finite by Cauchy-Schwarz. Putting everything together, we get that
    \begin{align*}
        & \norm*{\theta(t + 1) - \begin{pmatrix}
                                       \xi \\ 0
                                   \end{pmatrix}}_2^2 \\
         & \quad \leq \left( 1 - \frac{2 \eta \rho \gamma^2}{1 + L \delta_2} + 4 \eta^2 L^2\, \left( \frac{L}{1 - L \delta_2} \right)^2\, \E\left[ \norm*{\begin{pmatrix}
                                                                                                                                                                X \\ y_{\theta(t)}(X)
                                                                                                                                                            \end{pmatrix}}_2^4 \right] \right) \norm*{\theta(t) - \begin{pmatrix}
                                                                                                                                                                                                                  \xi \\ 0
                                                                                                                                                                                                              \end{pmatrix}}_2^2.
    \end{align*}
    To see that $\lambda_2$ is finite, we note that by \eqref{eq:y-bound-first} and the fact that $X \in L^4(\P)$, the integrand is uniformly dominated for $\theta \in \Theta$. Hence, the dominated convergence theorem implies that the expectation is continuous in $\theta$ over the compact set $\Theta$, so the extreme value theorem implies that the supremum defining $\lambda_2$ is finite. It is also clear that $\lambda_1$ and $\lambda_2$ are positive because $X$ is not almost surely constant (by the assumptions on the data in \Cref{thm:nonlinear-convergence}). Then, we have
    \begin{align*}
        1 - \eta \lambda_1 + \eta^2 \lambda_2 \leq 1 - \frac{\eta \lambda_1}{2} < 1
        \iff \eta \left( \eta \lambda_2 - \frac{\lambda_1}{2} \right) \leq 0
        \iff \eta \leq \frac{\lambda_1}{2 \lambda_2}.
    \end{align*}
    Hence, it suffices to choose any step size $\eta \leq \lambda_1 / (2 \lambda_2)$ to have the bound
    \begin{align*}
        \norm*{\theta(t + 1) - \begin{pmatrix}
                                       \xi \\ 0
                                   \end{pmatrix}}_2^2
        \leq \left( 1 - \frac{\eta \lambda_1}{2} \right)\, \norm*{\theta(t) - \begin{pmatrix}
                                                                                      \xi \\ 0
                                                                                  \end{pmatrix}}_2^2.
    \end{align*}
    Finally, since we initialize $\theta(0)$ such that $\norm{\theta(0) - (\xi, 0)}_2 \leq \min\set{(1 + \delta_1) \norm{\xi}_2,\, \delta_2}$, then by induction we have $\theta(t) \in \Theta$ for all $t \in \N_0$, completing the proof.
\end{proof}

Next, we empirically validate our results.

\section{Experiments} \label{sec:experiments}

In this section, we provide empirical validation of our theory. All code for our experiments can be found at the following GitHub repository:
\begin{center}
    \texttt{\href{https://github.com/sanjitdp/single-index-deq}{https://github.com/sanjitdp/single-index-deq}}
\end{center}

\subsection{Linear models}

We begin by considering the function $f(x) = \xi x$ with $\xi = 2$ and fitting an implicit linear model of the form
\begin{align*}
    y_\theta(x) = g_\theta(x, y_\theta(x)) = \theta_1\, x + \theta_2\, y_\theta(x),
\end{align*}
running gradient descent on this implicit model starting from $\theta(0) \sim \Normal(0,\, 0.1^2\, I_2)$ using 1000 samples from $\Unif([0, 1])$. We run gradient descent for 200 epochs with a constant step size of $\eta = 0.01$. We depict the resulting loss over epochs and the learned function after training in \Cref{fig:lm-training}, showing that gradient descent is able to successfully learn the target function.

\begin{figenv}[(a) The loss of the linear DEQ converges to zero over epochs and (b) the learned function is close to the ground truth, so the implicit model is able to learn the target function.][lm-training]
    \subfig[0.47]{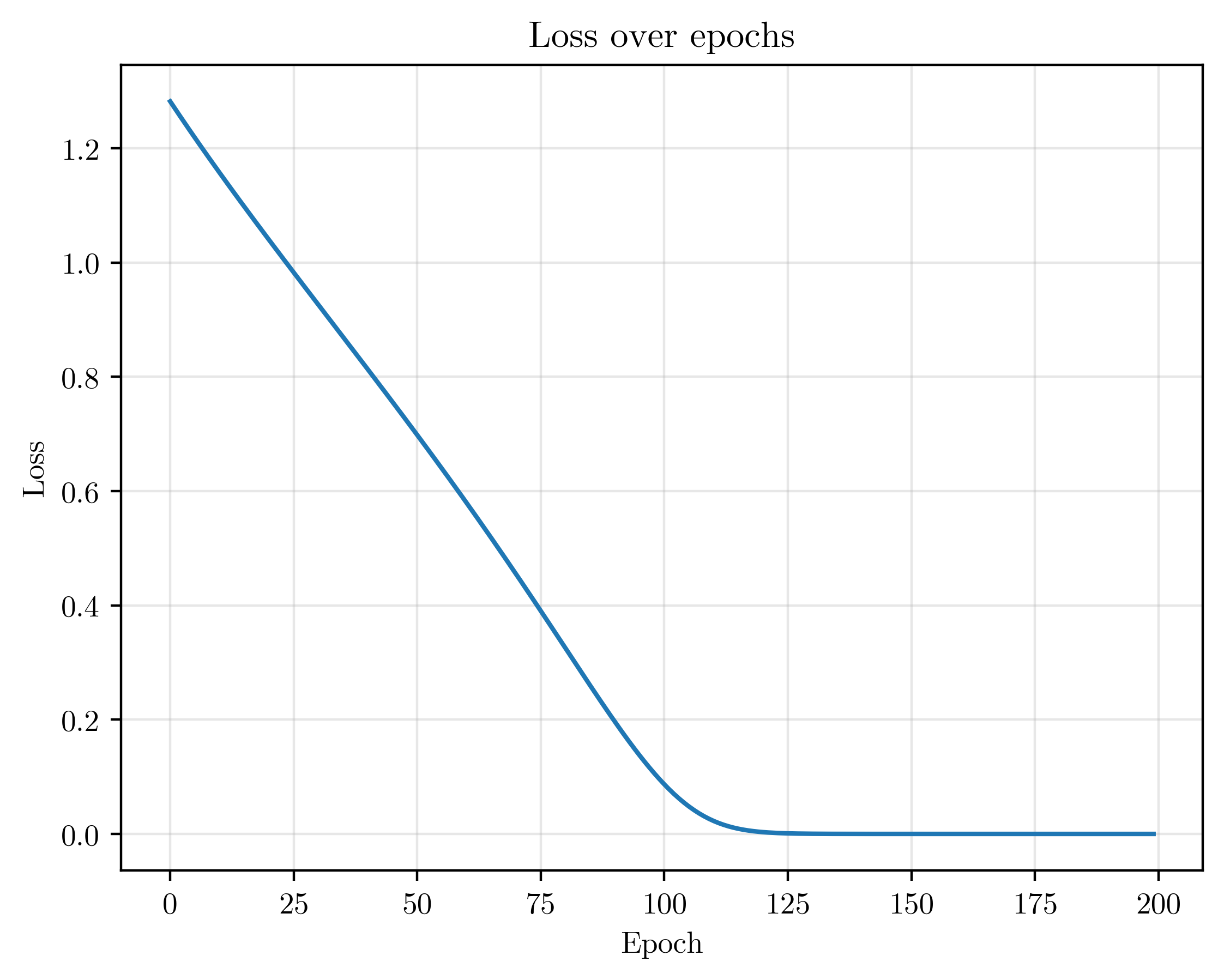}[Loss over epochs]
    \subfig[0.47]{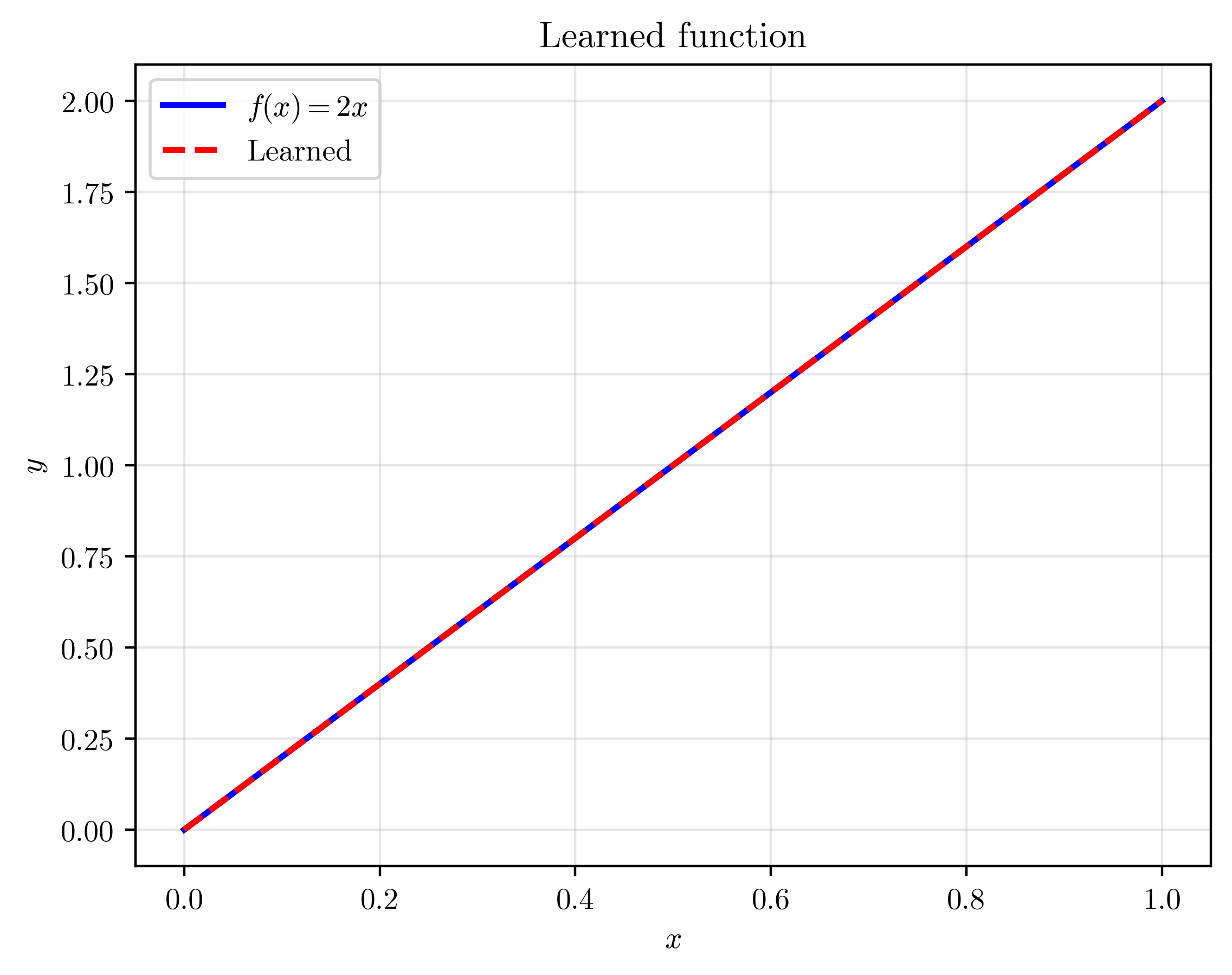}[Learned function]
\end{figenv}

Then, we can plot the associated trajectory of the parameters in the $(\theta_1, \theta_2)$-plane, which we see spirals outward away from the trivial solution at $(0, 1)$ in \Cref{fig:lm-dynamics}; this is what we expect from \eqref{eq:w-recursion}. We also plot the distance from the trivial solution over epochs, showing that the iterates move away from the trivial solution as training progresses.

\begin{figenv}[(a) The gradient descent trajectory of the parameters roughly orbits around the ground truth, as predicted by our analysis of the gradient flow (\Cref{thm:linear-conservation}). The line $(\alpha \xi, 1 - \alpha)$ for $\alpha \in \R$ represents the set of valid solutions and $\theta = (0, 1)$ corresponds to the trivial model $y_\theta(x) = y_\theta(x)$. (b) In addition, the parameters move further from the trivial solution $(0, 1)$, as we demonstrated in \eqref{eq:w-recursion}.][lm-dynamics]
    \subfig[0.47]{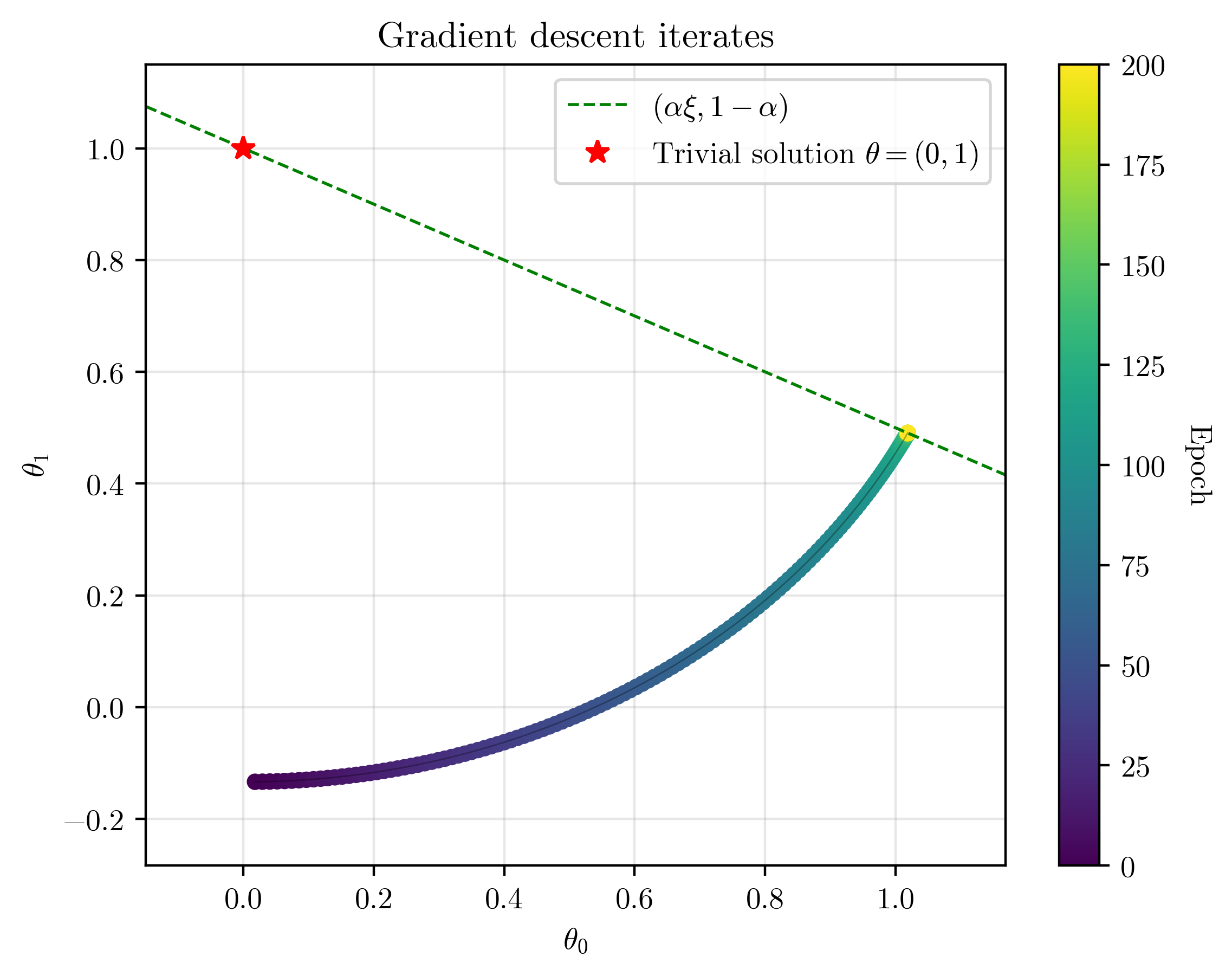}[Gradient descent trajectory]
    \subfig[0.47]{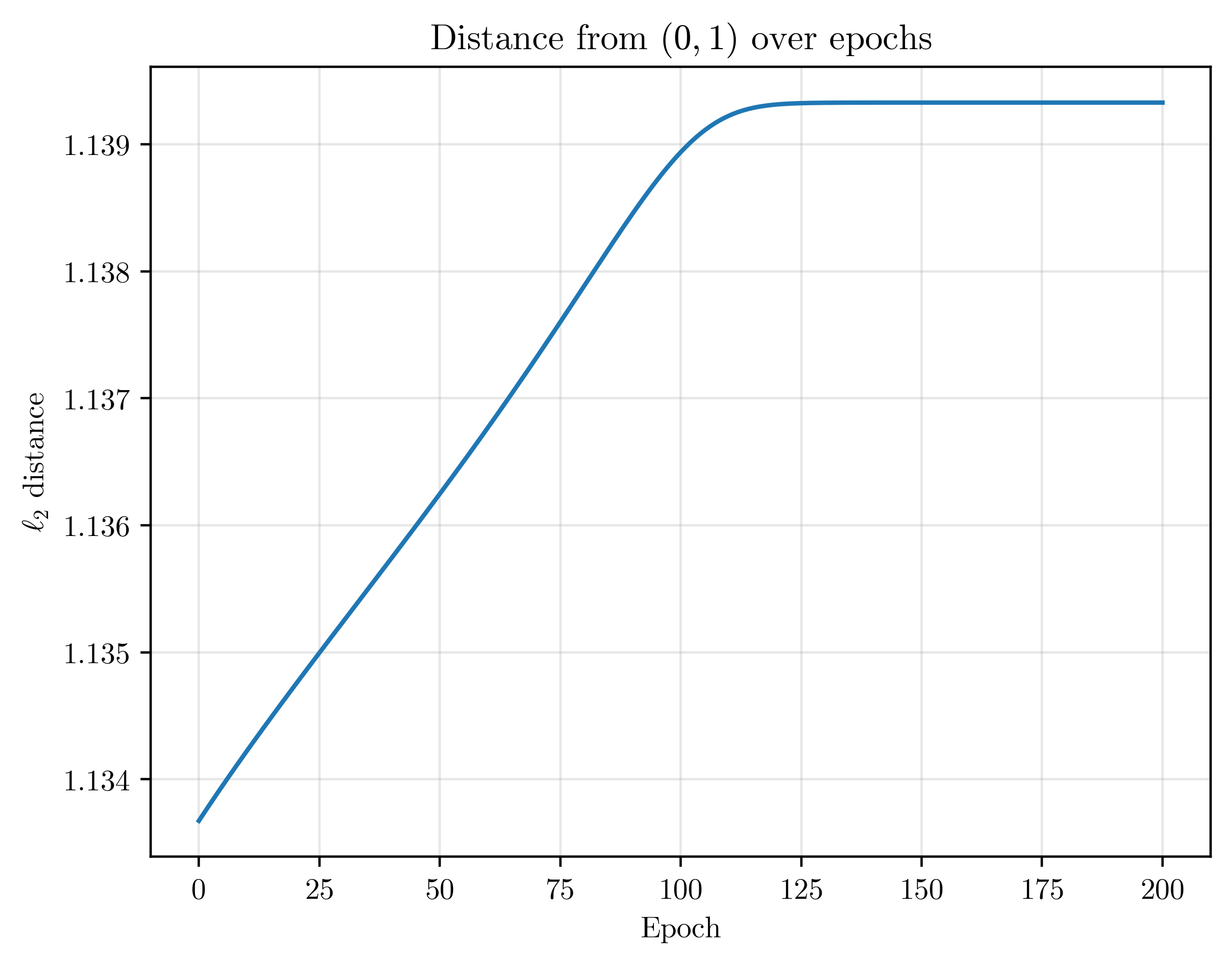}[Distance from $(0, 1)$ over epochs]
\end{figenv}

We observe the same phenomenon even when we initialize very close to the trivial solution by $\theta(0) \sim \Normal((0, 1),\, 0.1^2 I_2)$, as shown in \Cref{fig:lm-unstable}.

\begin{figenv}[Even when we initialize close to the trivial solution $(0, 1)$, we observe that the gradient descent iterates (a) converge to the ground truth and (b) move further from $(0, 1)$ over epochs. The line $(\alpha \xi, 1 - \alpha)$ for $\alpha \in \R$ represents the set of valid solutions and $\theta = (0, 1)$ corresponds to the trivial model $y_\theta(x) = y_\theta(x)$.][lm-unstable]
    \subfig[0.47]{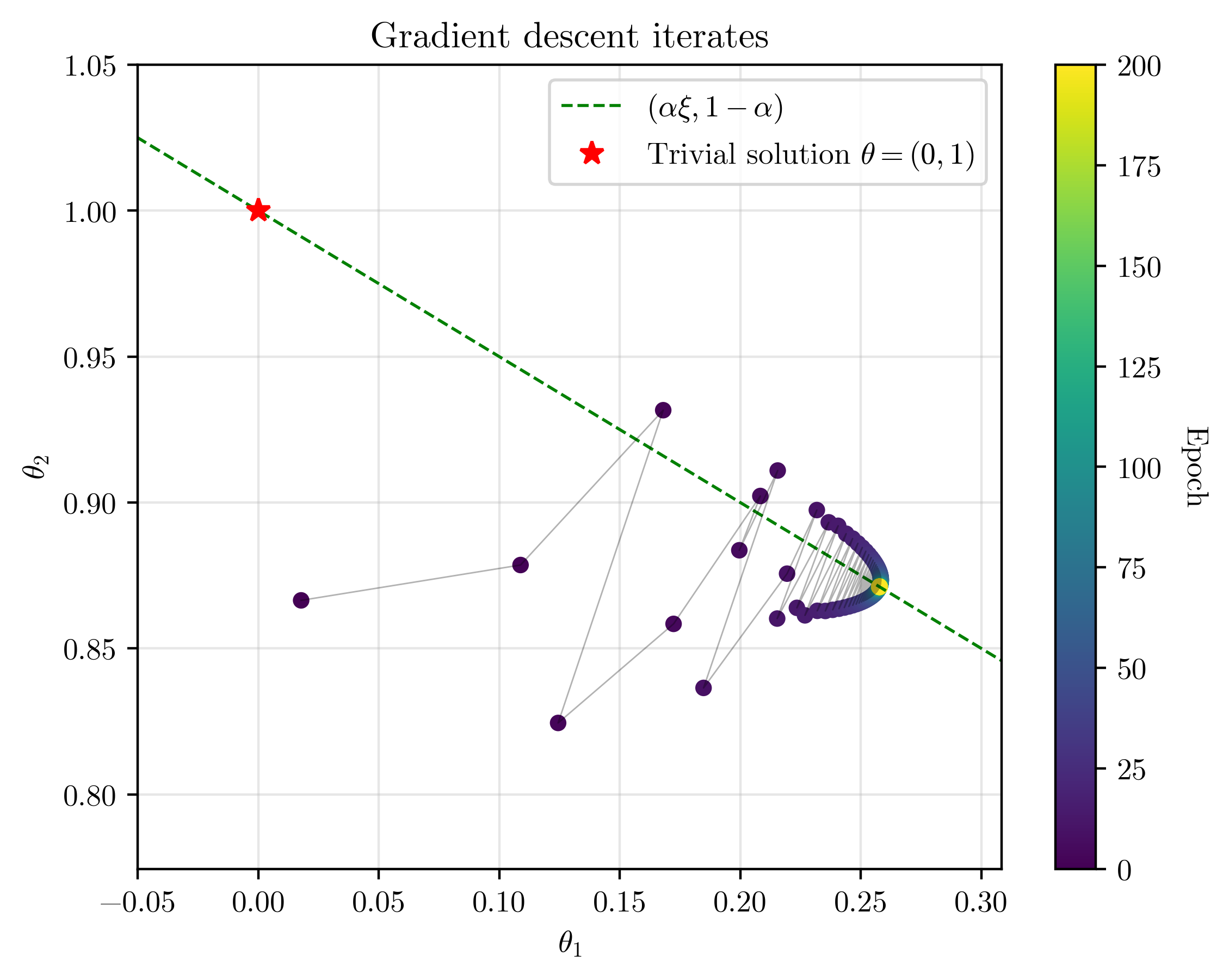}[Gradient descent trajectory]
    \subfig[0.47]{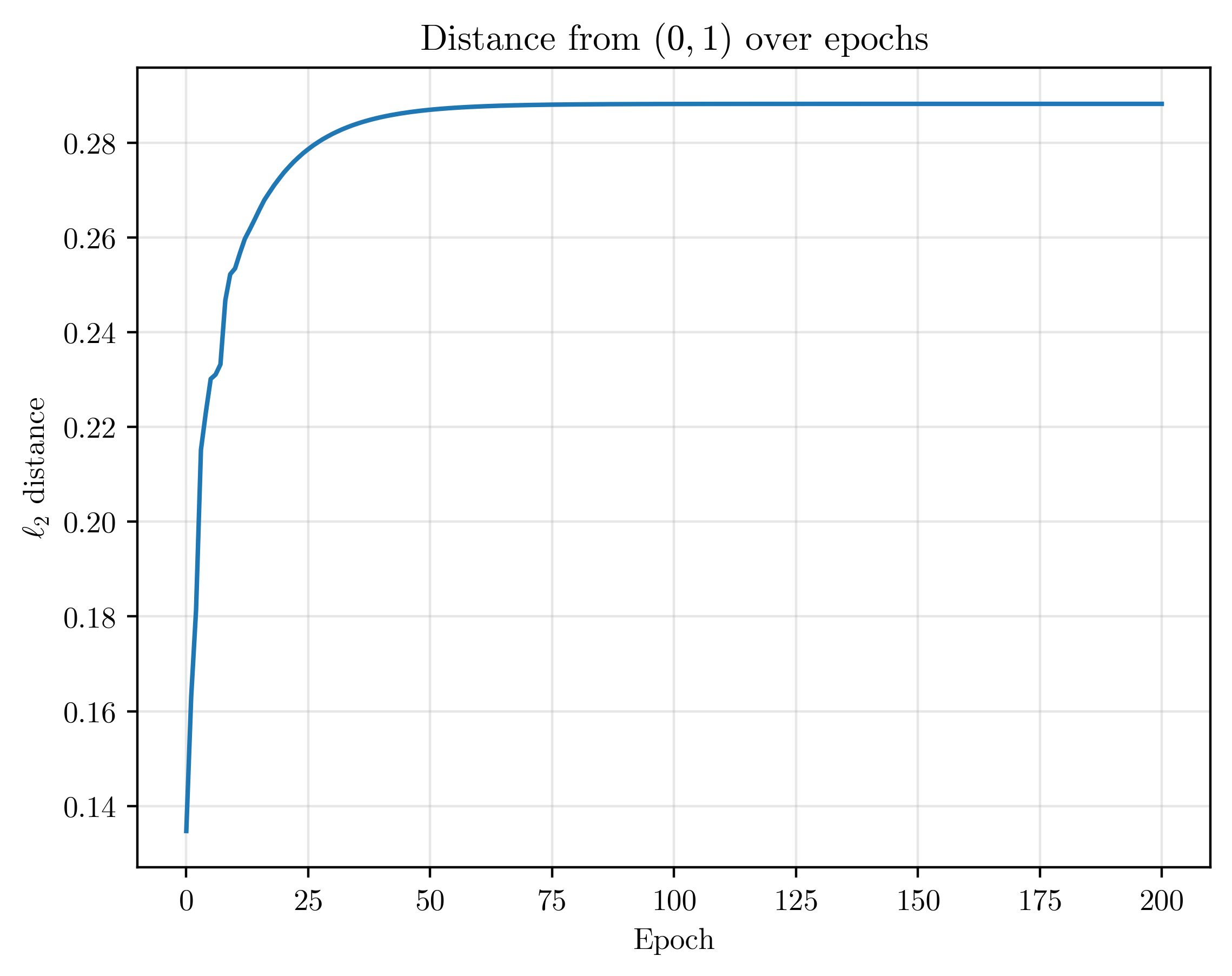}[Distance from $(0, 1)$ over epochs]
\end{figenv}

\subsection{Nonlinear single-index models}

Next, we consider the function $f(x) = \sigma(2x)$ with the sigmoid activation function $\sigma(z) = 1 / (1 + e^{-z})$ and fit an implicit single-index model of the form
\begin{align*}
    y_\theta(x) = g_\theta(x, y_\theta(x)) = \sigma(\theta_1\, x + \theta_2\, y_\theta(x)),
\end{align*}
running gradient descent on this implicit model starting from $\theta(0) \sim \Normal(0,\, 0.1^2\, I_2)$ using 1000 training samples from $\Unif([0, 1])$. For the forward pass, we use Brent's method as the root-finding algorithm \citep{brent2013algorithms}. We run gradient descent for 4000 epochs with a constant step size of $\eta = 0.1$. We depict the resulting loss over epochs and the learned function after training in \Cref{fig:deq-training}, showing that gradient descent is able to successfully learn the target function.

\begin{figenv}[(a) The loss of the single-index DEQ converges to zero over epochs and (b) the learned function is close to the ground truth, so the implicit model is able to learn the target function.][deq-training]
    \subfig[0.47]{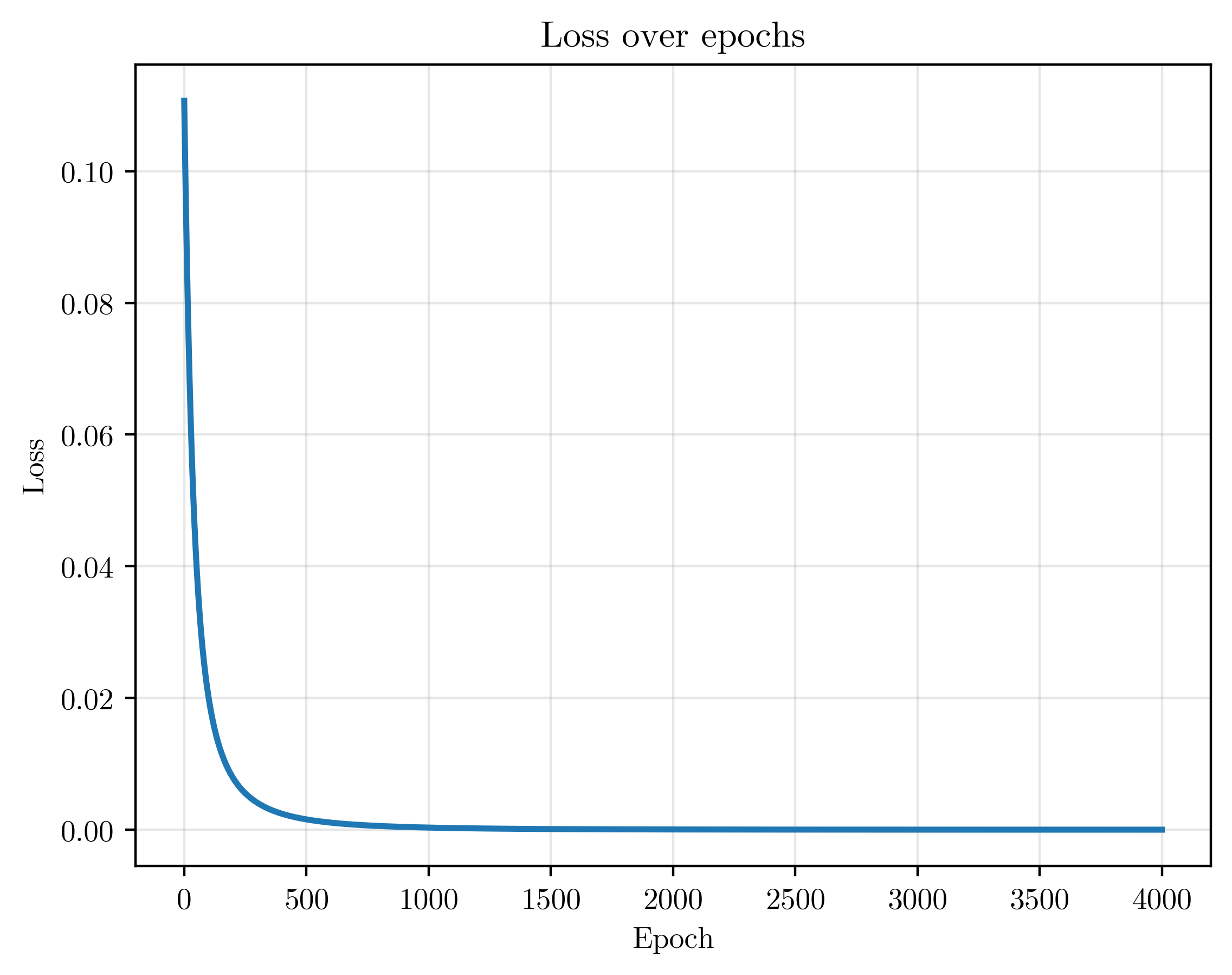}[Loss over epochs]
    \subfig[0.47]{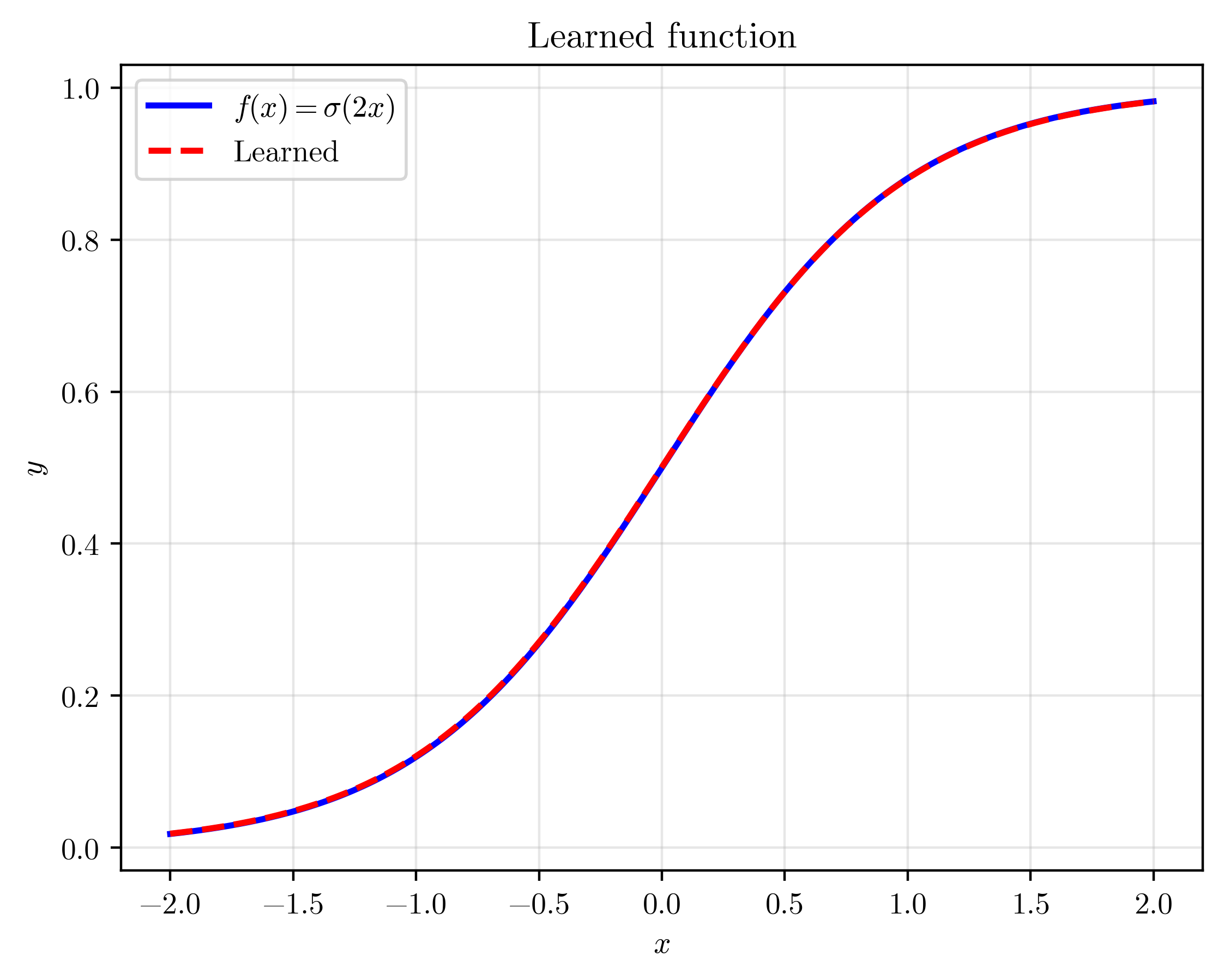}[Learned function]
\end{figenv}

Finally, we plot the associated trajectory of the parameters in the $(\theta_1, \theta_2)$-plane in \Cref{fig:deq-dynamics}, showing that the parameters converge to the target parameter $(2, 0)$ as training progresses. We also plot the distance from the target parameter over epochs, showing that the iterates move closer to the target parameter as training progresses.

\begin{figenv}[(a) The gradient descent trajectory of the parameters converges to the target parameter $(2, 0)$ and (b) the distance to the target parameter decreases over epochs, as demonstrated in \Cref{thm:nonlinear-gd-convergence}.][deq-dynamics]
    \subfig[0.47]{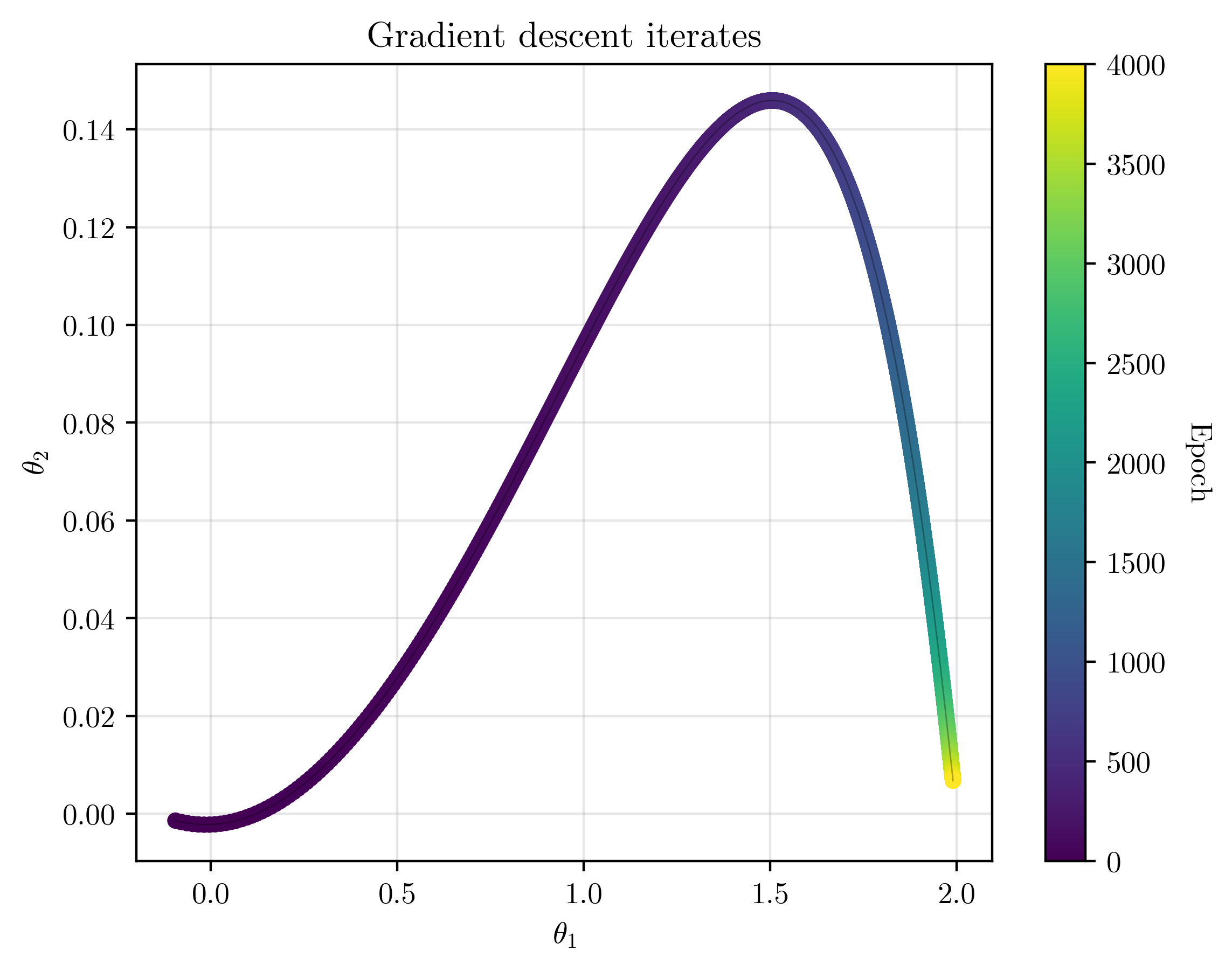}[Gradient descent trajectory]
    \subfig[0.47]{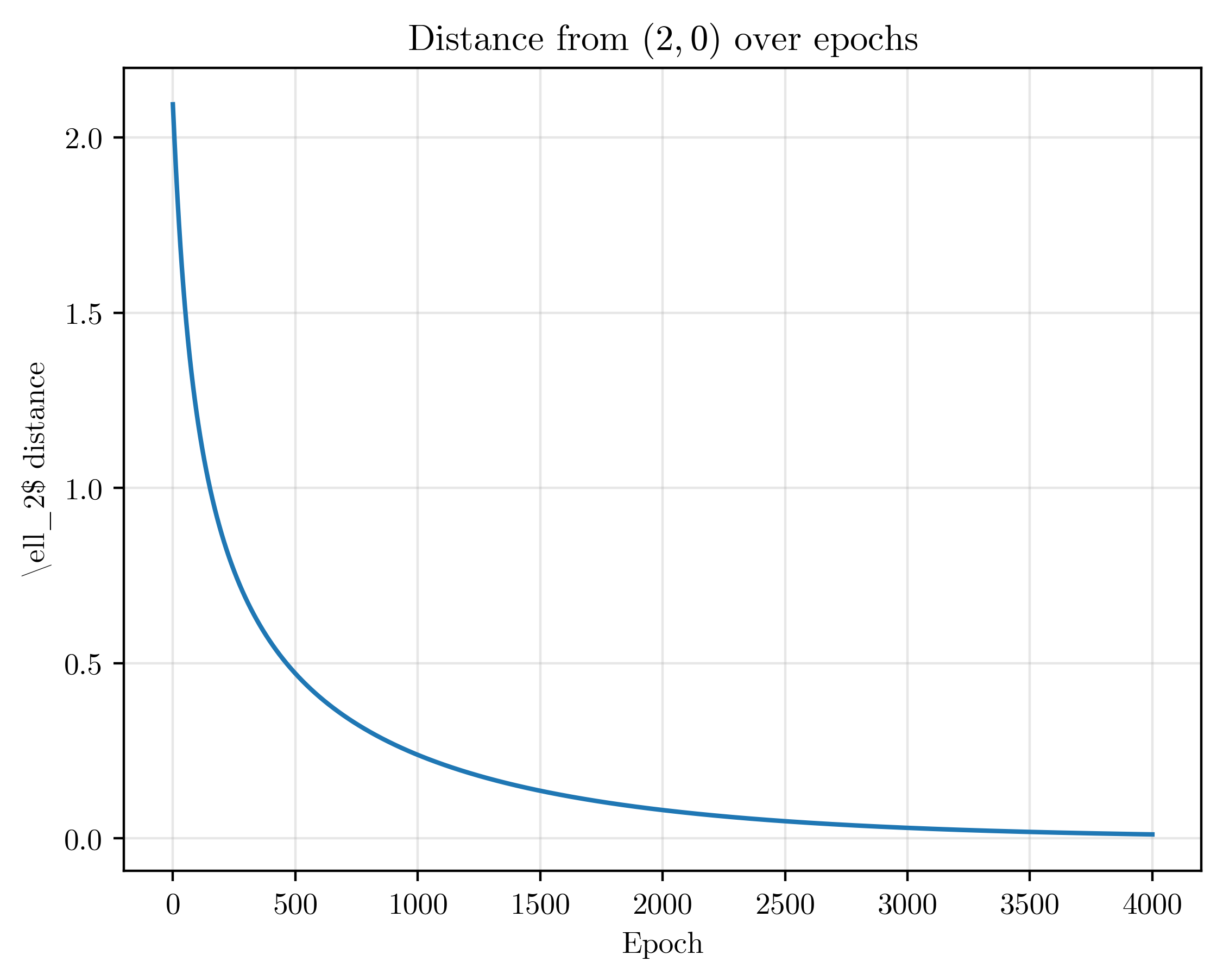}[Distance from $(2, 0)$ over epochs]
\end{figenv}

Interestingly, the parameters do not move in a straight line towards the target parameter, but rather follow a curved path. One heuristic explanation for this is that $\sigma(\theta_1\, x + \theta_2\, y) \approx \sigma(0) + (\theta_1\, x + \theta_2\, y)\, \sigma^\prime(0) = 1/2 + (\theta_1\, x + \theta_2\, y) / 4$ by Taylor's theorem for small values of $\theta$. If we further approximate $\sigma(2x) \approx 1/2 + x$ for $x \in [0, 1]$, then the training problem approximately reduces to fitting a linear model of the form $y_\theta(x) \approx (\theta_1\, x + \theta_2\, y_\theta(x)) / 4$ to the function $f(x) \approx x$. Hence, the training dynamics should approximately follow those of the linear model studied in \Cref{thm:linear-conservation}, which explains why the parameters begin by moving away from the $\theta_1$-axis to orbit around the approximate trivial solution $(0, 4)$.

As training progresses and the parameters grow larger in magnitude, the linear approximation becomes less accurate, and the parameters begin to curve back towards the target parameter $(2, 0)$ as guaranteed by \Cref{thm:nonlinear-convergence}. We know from the proof of \Cref{thm:nonlinear-convergence} that the rate of convergence depends on the amount of nonlinearity in $\sigma$, which further corroborates this explanation. Even so, we know from \Cref{thm:nonlinear-gd-convergence} that the distance to the target parameter must decrease monotonically throughout the trajectory, which we observe in \Cref{fig:distances-sigmoid}. We leave a more rigorous analysis of this phenomenon to future work. We also observe similar behavior when using the hyperbolic tangent and softplus activation functions, but we omit these experiments for brevity.

\section{Conclusion}

This work studies the gradient descent dynamics of training linear and single-index deep equilibrium models. We proved a conservation law for linear DEQs, which allowed us to explicitly characterize the trajectory of gradient flow and gradient descent. For single-index DEQs with nonlinear activation functions, we established local convergence guarantees for both gradient flow and gradient descent under suitable assumptions on the data distribution and activation function. Future work could extend these results to multi-neuron DEQs and explore the implications for practical training of deep equilibrium models.

\printbibliography

\end{document}